\documentclass[times, review, 10pt]{elsarticle}

\usepackage[ruled,vlined,linesnumbered]{algorithm2e}
\usepackage{caption}
\usepackage{subcaption}

\usepackage{makecell}
\usepackage{threeparttable}
\usepackage{siunitx}
\usepackage{algpseudocode}

\usepackage{float}
\usepackage{booktabs}
\usepackage{amssymb}
\usepackage{amsmath}
\usepackage{graphicx}
\usepackage{epsfig}
\usepackage{epstopdf}
\usepackage{hyperref}
\usepackage{multirow}
\usepackage{xcolor}

\usepackage{amsthm}

\newtheorem{theorem}{Theorem}

\usepackage{lineno}

\usepackage{setspace}
\onecolumn

\journal{Elsevier}

\usepackage[T1]{fontenc}
\usepackage[utf8]{inputenc} 
\begin{document}
\newcommand{\rev}[1]{\textcolor{blue}{#1}}
\begin{frontmatter}
\title{MDL-GBG: A Non-parametric and Interpretable Granular-Ball Generation Method for Clustering}

\author[1,2]{Zeqiang Xian}
\ead{xianzeqiang@gnnu.edu.cn}

\author[1,2]{Caihui Liu \corref{cor}}
\ead{liucaihui@gnnu.edu.cn}

\author[1,2]{Yong Zhang}
\ead{zhang_yong@gnnu.edu.cn}

\author[1,2]{Wenjing Qiu}
\ead{wenjingqiu@gnnu.edu.cn}

\author[3]{Duoqian Miao}
\ead{dqmiao@tongji.edu.cn}

\author[4]{Witold Pedrycz}
\ead{wpedrycz@ualberta.ca}

\address[1]{Department of Mathematics and Computer Science, Gannan Normal University, Ganzhou 341000, Jiangxi, China}

\address[2]{Key Laboratory of Data Science and Artificial Intelligence of Jiangxi Education Institutes, Gannan Normal University, Ganzhou 341000, Jiangxi, China}

\affiliation[3]{organization={Department of Computer Science and Technology, Tongji University},
	city={Shanghai},
	postcode={201804},
	country={China}}

\address[4]{Department of Electrical and Computer Engineering, University of Alberta, Edmonton, AB T6R 2V4, Canada}

\cortext[cor]{Corresponding author}
\begin{abstract}
Existing granular-ball generation methods are still mainly driven by handcrafted quality measures and heuristic splitting or stopping criteria, which may weaken the transparency of local generation decisions in clustering. To address this issue, this paper proposes Minimum Description Length based Granular-Ball Generation (MDL-GBG), a non-parametric and interpretable granular-ball generation method for clustering. MDL-GBG reformulates granular-ball generation as a local model selection problem under the Minimum Description Length principle. For each granular ball, three candidate explanations are compared, namely a single-ball model, a two-ball model, and a core-ball-residual model, and the model with the shortest description length is selected. In this way, ball retention, splitting, and residual peeling are unified within a common coding-theoretic framework. A residual reassignment mechanism is further introduced to re-evaluate peeled-off boundary samples after stable granular balls are formed. Experiments on 20 UCI datasets show that the stable granular balls generated by MDL-GBG provide an effective upstream representation for clustering. In particular, MDL-GBG+AC achieves the highest average ARI, ACC, and NMI values among the compared methods, while the Friedman-Nemenyi analysis further supports its favorable average ranking. These results indicate that MDL-GBG offers a principled and interpretable alternative to heuristic granular-ball generation strategies.
\end{abstract}

\begin{keyword}
	Granular-ball computing 
	\sep minimum description length 
	\sep local model competition 
	\sep interpretable clustering 
	\sep non-parametric algorithm
\end{keyword}

\end{frontmatter}

\section{Introduction}

Clustering is a fundamental task in unsupervised learning, aiming to uncover the intrinsic organization of unlabeled data~\cite{ding2024survey}. With the rapid growth of real-world datasets in scale, dimensionality, and structural complexity, traditional point-wise clustering methods are increasingly challenged by two closely related factors~\cite{hu2026interpretable,Xia2025_GBCT}. On the one hand, sample-level computation may lead to considerable time and memory costs, especially when pairwise relations or graph structures are involved. On the other hand, noise, heterogeneous densities, weak inter-point connections, and non-convex structures may make local geometric relations unstable and difficult to characterize reliably. These challenges have motivated the exploration of representations beyond individual samples, so that local regions can be described in a compact and structurally meaningful manner.

Granular-ball computing provides a natural framework for such local-region representation. Instead of treating isolated samples as the basic computational units, it represents data by adaptive balls with different centers, radii, and coverage regions, thereby organizing local structures in a coarse-to-fine manner that is consistent with the global-first cognitive mechanism~\cite{chen1982topological}. Early studies showed that using granular balls rather than individual samples can reduce sample-level computation and improve robustness to local perturbations~\cite{Xia2019_GBClassifiers}. Subsequent work further established granular-ball computing as an adaptive multi-granularity representation and computation paradigm, with advantages in efficiency, robustness, and interpretability~\cite{Xia2024_GBCSurvey,Xia2024_GBFuzzySet}. From this perspective, granular-ball computing is not merely a compression-oriented approximation of the original data. Rather, it provides a structured local-region representation that can serve as a stable and interpretable basis for downstream learning tasks.

This representation has been widely adopted in clustering. Existing studies have incorporated granular balls into spectral clustering~\cite{Xie2023_GBSC,Cheng2024GBUSC}, density peaks clustering~\cite{Cheng2024_GBDP,NI2026GBK_DPC,GUO2026granular_ellipsoid}, DBSCAN-style clustering~\cite{Cheng2024_GBDBSCAN,Luo2025_NaGBDBSCAN}, hierarchical clustering~\cite{JIN2026GBHC}, manifold clustering~\cite{ZHANG2026EGBDPM,jiang2026gb}, and multi-view clustering~\cite{liu2026views,Su2025_MGBCC}. In these methods, granular balls are not used merely to reduce the number of computational units; they also participate directly in similarity estimation, graph or density construction, and final cluster formation. Consequently, the reliability of these downstream procedures depends substantially on whether the generated granular balls provide an appropriate description of the local data structure. This dependence makes granular-ball generation a central component rather than a preliminary preprocessing step in granular-ball-based clustering.

Accordingly, considerable effort has been devoted to improving how granular balls are generated. Some studies reduce repeated partitioning or redesign the generation procedure to improve computational efficiency~\cite{Xia2024_EfficientAdaptiveGBG,Xie2024_GBGPP}. Others introduce heterogeneous distance measures, local density information, or granularity adjustment to better characterize the internal structure of a candidate ball~\cite{Liao2025_ADPGBG,Liu2025_LDGBG,Pan2025_GranularityTuning}. The principle of justifiable granularity has also been incorporated into granular-ball generation and regeneration~\cite{Jia2025_POJGGeneration,Li2025_RGGB}, while more recent studies have considered additional structure-aware information~\cite{Wang2026_SAGBC}. These developments have substantially enriched the mechanisms available for constructing granular balls. Nevertheless, the decisions to retain, split, terminate, or partially regenerate a ball are still commonly governed by distinct criteria designed for individual operations. As a result, alternative structural descriptions of the same local region are not generally compared under a common decision principle.

This limitation also concerns the interpretation of the generation process. Granular-ball methods are often regarded as interpretable because the generated balls have explicit geometric meanings, including their centers, radii, and coverage regions~\cite{Xia2024_GBCSurvey}. However, the geometry of the resulting balls does not necessarily explain why a local region is retained, divided, or partially refined. This issue is particularly relevant to clustering, where class-label information is unavailable to guide local generation decisions. A local region may be retained even when it contains distinguishable internal structures, divided repeatedly into unnecessary sub-balls, or enlarged by peripheral samples that substantially increase its radius and dispersion. Although recent structure-aware, natural-neighbor-assisted, and regeneration-based methods have addressed different aspects of this problem~\cite{Wang2026_SAGBC,Luo2025_NaGBDBSCAN,Li2025_RGGB}, the principal generation operations are still not generally evaluated under a common objective.

To address this issue, this paper proposes Minimum Description Length based Granular-Ball Generation (MDL-GBG) for clustering. The central idea is to formulate granular-ball generation as a local model-selection problem under the Minimum Description Length principle~\cite{barron1998minimum,grunwald2007minimum}. For each current granular ball, MDL-GBG evaluates three candidate local models: a single-ball model, a two-ball model, and a core-ball-residual model. These models correspond, respectively, to retaining the current region, dividing it into two sub-balls, and representing it by a compact core together with peripheral residuals. The model with the shortest description length determines the subsequent local operation. Ball retention, splitting, and residual peeling are therefore evaluated through the same description-length criterion rather than through separately designed rules. A residual reassignment procedure is further introduced to reconsider the peeled samples after the stable granular balls have been generated.

The main contributions of this work are summarized as follows.

\begin{enumerate}[(1)]
	\item A local model-selection view of granular-ball generation is introduced under the Minimum Description Length principle, leading to a non-parametric framework termed MDL-GBG for clustering.
	\item Three structurally meaningful local explanations are designed, namely the single-ball model, the two-ball model, and the core-ball-residual model, which unify ball retention, splitting, and residual peeling within a common coding criterion.
	\item The generation process is made more interpretable at the decision level, since each local regeneration step is associated with a coding trade-off among competing structural hypotheses.
	\item Experiments on 20 UCI datasets show that the stable granular balls generated by MDL-GBG can serve as a competitive upstream representation for downstream clustering.
\end{enumerate}

The remainder of this paper is organized as follows. Section~\ref{sec:methodology} presents the proposed MDL-GBG method. Section~\ref{sec:complexity} analyzes its computational and space complexity. Section~\ref{sec:experiments} reports the experimental results. Section~\ref{sec:discussion_conclusion} discusses the findings and concludes the paper.

\section{Methodology}
\label{sec:methodology}

This section presents the proposed MDL-GBG framework. Granular-ball generation is formulated as local model selection under the Minimum Description Length principle. For each current ball, candidate structural explanations are evaluated by a unified coding criterion, so that retention, splitting, and residual peeling are determined within the same decision framework.

\subsection{Problem Formulation}

Let $X=\{x_1,x_2,\dots,x_n\}$, where $x_i\in\mathbb{R}^d$, be the input dataset. The goal is to construct a stable granular-ball set $\mathcal{G}^\star=\{B_1,B_2,\dots,B_{m^\star}\},$
together with an optional residual set $\mathcal{R}^\star\subseteq X,$ where $m^\star$ denotes the number of stable granular balls generated by MDL-GBG. Each retained ball is expected to provide a concise and stable local representation of the data.

For the sample subset $X_B$ covered by a current granular ball $B$, three candidate local explanations are considered:
\begin{enumerate}[(1)]
	\item \textbf{Single-ball model} $M_1$: the local region is adequately represented by one compact ball;
	\item \textbf{Two-ball model} $M_2$: the local region contains two distinct local centers and is better represented by two sub-balls;
	\item \textbf{Core-ball-residual model} $M_3$: the local region remains largely single-centered, but a small number of peripheral samples are better treated as residuals outside a core ball.
\end{enumerate}

For each current granular ball $B$, the optimal local explanation is defined as
\begin{equation}
	M^\star(B)=\mathop{\arg\min}\limits_{M\in\{M_1,M_2,M_3\}}L(M;X_B),
	\label{eq:best_model_en}
\end{equation}
where $L(M;X_B)$ denotes the description length of model $M$ for encoding $X_B$.

\subsection{Local Baseline Assumption}

The single-ball baseline treats a local region as a single-centered structure without clear internal fragmentation and models it by an isotropic Gaussian distribution:
\begin{equation}
	x_i \overset{i.i.d.}{\sim}\mathcal{N}(c,\sigma^2 I_d),
	\label{eq:single_gaussian_en}
\end{equation}
where $c\in\mathbb{R}^d$ denotes the ball center and $\sigma^2$ is a common dispersion parameter. The isotropic Gaussian model is used as a parsimonious reference model for local description-length comparison.

\subsection{Coarse-Grained Initialization}

Before local MDL competition, the dataset is partitioned into a coarse set of initial granular balls, which are subsequently processed through local model evaluation and regeneration.

For any subset $Y=\{y_1,\dots,y_m\}\subseteq\mathbb{R}^d$, an approximate farthest-point bisection strategy is adopted. The first sample in the fixed ordering of $Y$ is used as the initial anchor, i.e., $y^{(0)}=y_1$. The first farthest point is defined as
\begin{equation}
	y^{(1)}=\arg\max_{y\in Y}\|y-y^{(0)}\|_2,
	\label{eq:init_far1_en}
\end{equation}
and the second farthest point is
\begin{equation}
	y^{(2)}=\arg\max_{y\in Y}\|y-y^{(1)}\|_2.
	\label{eq:init_far2_en}
\end{equation}
The subset is then partitioned according to its relative proximity to the two anchors:
\begin{equation}
	Y_1=\{y\in Y:\|y-y^{(1)}\|_2\le \|y-y^{(2)}\|_2\},
	Y_2=Y\setminus Y_1.
	\label{eq:init_split_en}
\end{equation}

The bisection is applied only when $|Y|\ge 2n_{\min}$ and is recursively performed until the number of initial granular balls reaches
\begin{equation}
	k_0=\max\bigl(1,\lfloor\sqrt{n}\rfloor\bigr).
	\label{eq:k0_en}
\end{equation}

The resulting granular balls form the initial regions for the subsequent local MDL competition.

\subsection{Adaptive Minimum Admissible Ball Size}

To ensure sufficient sample support for local model comparison while avoiding a fixed empirical threshold, the minimum admissible ball size is adaptively defined as
\begin{equation}
	n_{\min}
	=
	\max\left(
	2,\;
	\left\lceil
	\min\left(
	\frac{\sqrt{n}}{\log(\sqrt{d+2})},
	\; d+2
	\right)
	\right\rceil
	\right),
	\label{eq:nmin_en}
\end{equation}
where $n$ is the total number of samples and $d$ is the feature dimension.

Equation~\eqref{eq:nmin_en} provides a deterministic, data-dependent admissibility condition for local model comparison. Throughout this paper, $\log$ denotes the natural logarithm. Using another logarithm base only rescales all description lengths by a common positive factor and does not change the local model-selection result. For likelihood-based code lengths, additive constants that are common to all competing local explanations do not affect the $\arg\min$ decision.

\subsection{Basic Geometric Quantities}

For any nonempty subset
$
X_B=\{x_1,\dots,x_{n_B}\}\subseteq X,
$
the center of granular ball $B$ is defined as
\begin{equation}
	c_B=\frac{1}{n_B}\sum_{i=1}^{n_B}x_i,
	\label{eq:center_en}
\end{equation}
and its radius is defined as
\begin{equation}
	r_B=\max_{1\le i\le n_B}\|x_i-c_B\|_2.
	\label{eq:radius_en}
\end{equation}

\subsection{Single-Ball Model}

For a sample subset $Y=\{y_1,\dots,y_m\}\subseteq\mathbb{R}^d$, the single-ball model assumes
$
y_i \overset{i.i.d.}{\sim}\mathcal{N}(c,\sigma^2 I_d).
$
Its description length is defined as
\begin{equation}
	L_1(Y)= -\log p(Y\mid \hat c,\hat \sigma^2)+\frac{k_1}{2}\log m.
	\label{eq:L1_en}
\end{equation}
In this expression, the likelihood term reflects the dispersion of $Y$ around a shared local center, while the penalty term accounts for the complexity of the local code. Since the model estimates a $d$-dimensional center and one isotropic variance, $k_1=d+1$. Thus, $L_1(Y)$ measures the description length of representing $Y$ as one coherent granular ball. Throughout this paper, $\log$ denotes the natural logarithm.

\begin{theorem}
	Under the single-ball model, the maximum likelihood estimates are
	\begin{equation}
		\hat c=\bar y=\frac{1}{m}\sum_{i=1}^m y_i,
		\label{eq:mle_mean_en}
	\end{equation}
	and
	\begin{equation}
		\hat\sigma^2=\frac{1}{dm}\sum_{i=1}^m\|y_i-\bar y\|_2^2.
		\label{eq:mle_var_en}
	\end{equation}
\end{theorem}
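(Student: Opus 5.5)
The plan is to write down the log-likelihood of the isotropic Gaussian model explicitly and maximize it in two stages: first over the center $c$ with $\sigma^2$ held fixed, then over $\sigma^2$ with the optimal center substituted back. Under $y_i \overset{i.i.d.}{\sim}\mathcal{N}(c,\sigma^2 I_d)$, the log-likelihood of $Y$ is
\begin{equation*}
\ell(c,\sigma^2)=-\frac{dm}{2}\log(2\pi\sigma^2)-\frac{1}{2\sigma^2}\sum_{i=1}^m\|y_i-c\|_2^2 .
\end{equation*}
For any fixed $\sigma^2>0$, only the quadratic term depends on $c$. Maximizing $\ell$ in $c$ is therefore equivalent to minimizing $S(c)=\sum_{i=1}^m\|y_i-c\|_2^2$.

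For the center, I would use the standard decomposition obtained by inserting $\bar y$:
\begin{equation*}
S(c)=\sum_{i=1}^m\|y_i-\bar y\|_2^2+m\|\bar y-c\|_2^2 .
\end{equation*}
The cross term vanishes because $\sum_i (y_i-\bar y)=0$. This identity shows directly that $c=\bar y$ is the unique minimizer, for every value of $\sigma^2$, without any calculus. So $\hat c=\bar y$ holds jointly, which is \eqref{eq:mle_mean_en}.

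Next I would substitute $c=\bar y$ and write $S=\sum_i\|y_i-\bar y\|_2^2$. The profile log-likelihood is then $g(\sigma^2)=-\frac{dm}{2}\log(2\pi\sigma^2)-\frac{S}{2\sigma^2}$. With $v=\sigma^2$, its derivative is $g'(v)=-\frac{dm}{2v}+\frac{S}{2v^2}$. This is positive for $v<S/(dm)$ and negative for $v>S/(dm)$, so $v=S/(dm)$ is the unique global maximizer on $(0,\infty)$. That gives \eqref{eq:mle_var_en}.

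The only step needing care is the degenerate case $S=0$, where all $y_i$ coincide. There $g$ increases without bound as $v\to0$, so no maximizer exists in the open parameter space. I would handle this by remarking that the formula then gives the boundary value $\hat\sigma^2=0$, understood as a limit. In the intended setting $m\ge n_{\min}\ge2$ and the points are generically not all equal, so $S>0$ and the argument above goes through. Apart from this case, the proof reduces to the sum-of-squares decomposition and a one-variable sign check.
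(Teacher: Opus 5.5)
Your proposal is correct and takes essentially the same route as the paper: you profile out $c$ by noting that only $\sum_i\|y_i-c\|_2^2$ depends on it and is minimized by $\bar y$, then substitute and solve the first-order condition in $\sigma^2$ to obtain $\mathrm{SSE}(Y)/(dm)$. Your sum-of-squares decomposition, the sign check on $g'$, and the remark on the degenerate case $S=0$ make explicit three things the paper asserts or leaves implicit: uniqueness of the minimizer, global maximality, and existence of the estimate.
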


\begin{proof}
	The joint density can be written as
	\begin{equation}
		p(Y\mid c,\sigma^2)=\prod_{i=1}^{m}\frac{1}{(2\pi\sigma^2)^{d/2}}
		\exp\left(-\frac{\|y_i-c\|_2^2}{2\sigma^2}\right).
	\end{equation}
	Taking the negative logarithm yields
	\begin{equation}
		-\log p(Y\mid c,\sigma^2)
		=
		\frac{md}{2}\log(2\pi\sigma^2)
		+
		\frac{1}{2\sigma^2}\sum_{i=1}^{m}\|y_i-c\|_2^2.
		\label{eq:nll_en}
	\end{equation}
	Minimizing with respect to $c$ is equivalent to minimizing
	$
	\sum_{i=1}^{m}\|y_i-c\|_2^2,
	$
	whose unique minimizer is the sample mean $\bar y$, which gives Eq.~\eqref{eq:mle_mean_en}. Let
	\begin{equation}
		\mathrm{SSE}(Y)=\sum_{i=1}^{m}\|y_i-\bar y\|_2^2.
	\end{equation}
	Substituting $c=\bar y$ into Eq.~\eqref{eq:nll_en} gives
	\begin{equation}
		-\log p(Y\mid \bar y,\sigma^2)
		=
		\frac{md}{2}\log(2\pi\sigma^2)+\frac{\mathrm{SSE}(Y)}{2\sigma^2}.
	\end{equation}
	Differentiating with respect to $\sigma^2$ and setting the derivative to zero yields
	\begin{equation}
		\hat\sigma^2=\frac{\mathrm{SSE}(Y)}{dm},
	\end{equation}
	which is Eq.~\eqref{eq:mle_var_en}.
\end{proof}

Substituting the maximum likelihood estimates into Eq.~\eqref{eq:L1_en} yields the closed form
\begin{equation}
	L_1(Y)
	=
	\frac{md}{2}\Bigl(1+\log(2\pi\hat\sigma^2)\Bigr)
	+\frac{d+1}{2}\log m.
	\label{eq:L1_closed_en}
\end{equation}

\subsection{Two-Ball Model}

When a current ball contains two distinguishable local groups, representing all samples by a single center may lead to an unnecessarily large dispersion. The two-ball model therefore considers whether the region can be described more concisely by two sub-balls.

Suppose that $X_B$ is partitioned into two disjoint subsets $Y_1$ and $Y_2$, satisfying
$
Y_1\cup Y_2=X_B
$
and
$
Y_1\cap Y_2=\varnothing.
$
Let
$
m_j=|Y_j|
$
and
$
\pi_j=m_j/n_B
$
for $j=1,2$. The description length of the two-ball model is defined as
\begin{equation}
	L_2(X_B;Y_1,Y_2)
	=
	L_{\mathrm{part}}(Y_1,Y_2)
	+
	L_1(Y_1)
	+
	L_1(Y_2),
	\label{eq:L2_en}
\end{equation}
where
\begin{equation}
	L_{\mathrm{part}}(Y_1,Y_2)
	=
	n_BH(\pi_1,\pi_2),
	\label{eq:partition_cost_en}
\end{equation}
and
\begin{equation}
	H(\pi_1,\pi_2)
	=
	-\pi_1\log\pi_1
	-\pi_2\log\pi_2.
	\label{eq:entropy_en}
\end{equation}

The first term accounts for the sample assignments induced by the binary partition, whereas $L_1(Y_1)$ and $L_1(Y_2)$ describe the two resulting sub-balls. In particular, the number of binary assignments with subset sizes $m_1$ and $m_2$ is $\binom{n_B}{m_1}$, and Stirling's approximation gives
\begin{equation}
	\log\binom{n_B}{m_1}
	=
	n_BH(\pi_1,\pi_2)
	+
	o(n_B).
	\label{eq:entropy_approx_en}
\end{equation}
Thus, the partition cost reflects the additional information required to represent the current region by two sample groups rather than by a single ball.

To construct candidate partitions, the samples in $X_B$ are projected onto the first principal-component direction and sorted according to their projected values. Each cut position satisfying
$
|Y_1|\ge n_{\min}
$
and
$
|Y_2|\ge n_{\min}
$
defines a feasible candidate.

Let $\mathcal{P}_2(B)$ denote the set of feasible partitions obtained from this ordering. The optimal two-ball description length is defined as
\begin{equation}
	L_2^\star(X_B)
	=
	\min_{(Y_1,Y_2)\in\mathcal{P}_2(B)}
	L_2(X_B;Y_1,Y_2).
	\label{eq:L2_star_en}
\end{equation}

The first principal-component projection provides a tractable one-dimensional ordering for candidate generation. The model therefore evaluates all admissible cuts along the direction of largest local variation, rather than searching over all possible bipartitions of $X_B$.

Figure~\ref{fig:m2_two_ball} illustrates a local region that is more suitably represented by two sub-balls.

\begin{figure}[h!]
	\centering
	\includegraphics[width=0.3\textwidth]{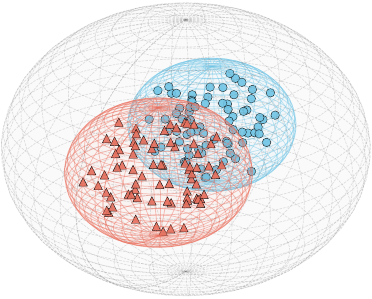}
	\caption{Illustration of the two-ball model $M_2$, in which the current local region is represented by two sub-balls.}
	\label{fig:m2_two_ball}
\end{figure}

\subsection{Core-Ball-Residual Model}

Another type of local mismatch occurs when the main body of a current ball remains single-centered, while a small number of peripheral samples substantially increase its dispersion and coding cost. Directly dividing such a region into two balls may unnecessarily fragment the dominant local structure. The core-ball-residual model is therefore introduced to separate a compact core from its peripheral samples.

The samples in $X_B$ are sorted in ascending order of their distances to the ball center $c_B$:
\[
\|x_{(1)}-c_B\|_2
\le
\cdots
\le
\|x_{(n_B)}-c_B\|_2.
\]
For any residual size $q\in\{1,\dots,n_B-n_{\min}\}$, the corresponding core and residual sets are defined as
\begin{equation}
	C_q=\{x_{(1)},\dots,x_{(n_B-q)}\},
	\qquad
	R_q=\{x_{(n_B-q+1)},\dots,x_{(n_B)}\}.
	\label{eq:core_residual_en}
\end{equation}

Let $r_{C_q}$ and $r_B$ denote the radii of the candidate core and the current ball, respectively. An outer radius is defined by
\begin{equation}
	r_{\mathrm{out}}=\beta \cdot r_B,
	\label{eq:rout_en}
\end{equation}
where $\beta>1$ specifies the spatial range used to describe the peripheral samples. In this paper, $\beta=2$ is used as the default setting.

The volume of a $d$-dimensional Euclidean ball with radius $r$ is
\begin{equation}
	V_d(r)=\frac{\pi^{d/2}}{\Gamma(d/2+1)}r^d.
	\label{eq:ball_volume_en}
\end{equation}
Accordingly, the volume associated with the residual region is
\begin{equation}
	V_{\mathrm{shell}}(q)
	=
	V_d(r_{\mathrm{out}})
	-
	V_d(r_{C_q}).
	\label{eq:shell_volume_en}
\end{equation}

The residual samples are described using a uniform background code over this volume. Under this reference model, each residual sample has coding cost $\log V_{\mathrm{shell}}(q)$, and the total residual cost is therefore
\begin{equation}
	L_{\mathrm{res}}(R_q\mid C_q)
	=
	q\log V_{\mathrm{shell}}(q).
	\label{eq:residual_cost_en}
\end{equation}
This term reflects the trade-off between the number of samples removed from the current ball and the spatial range required to describe them.

Since radial peeling uniquely determines $R_q$ once $q$ is specified, only the residual-size index needs to be encoded. Its coding cost is defined as
\begin{equation}
	L_{\mathrm{idx}}(q)
	=
	\log(\max(n_B,2)).
	\label{eq:index_cost_en}
\end{equation}
The description length of the core-ball-residual model is thus
\begin{equation}
	L_3(X_B;q)
	=
	L_1(C_q)
	+
	q\log V_{\mathrm{shell}}(q)
	+
	\log(\max(n_B,2)).
	\label{eq:L3_en}
\end{equation}

All feasible residual sizes are evaluated. Let
\[
\mathcal{Q}_B
=
\{1,\dots,n_B-n_{\min}\}.
\]
The optimal description length under the core-ball-residual model is
\begin{equation}
	L_3^\star(X_B)
	=
	\min_{q\in\mathcal{Q}_B}
	L_3(X_B;q).
	\label{eq:L3_star_en}
\end{equation}

Figure~\ref{fig:m3_model} illustrates the core-ball-residual model, in which the main local structure is retained as a compact core ball and the peripheral samples are temporarily separated for subsequent reassignment.

\begin{figure}[htbp]
	\centering
	\includegraphics[width=0.3\textwidth]{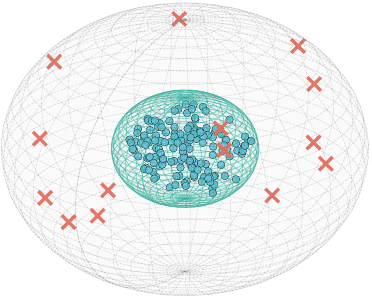}
	\caption{Illustration of the core-ball-residual model $M_3$. A compact core ball represents the main local structure, while peripheral samples are temporarily separated as residuals.}
	\label{fig:m3_model}
\end{figure}

\subsection{Local Model-Selection Criterion}

Based on the three candidate local models, the single-ball model is used as the reference for determining whether the current granular ball should be retained or regenerated. Specifically, $B$ is retained when representing $X_B$ as a single ball is no more costly than either competing alternative:
\begin{equation}
	L_1(X_B)
	\le
	\min\{
		L_2^\star(X_B),
		L_3^\star(X_B)
		\}.
	\label{eq:retention_criterion_en}
\end{equation}

Otherwise, the current ball requires local regeneration:
\begin{equation}
	\min\{
		L_2^\star(X_B),
		L_3^\star(X_B)
		\}
	<
	L_1(X_B).
	\label{eq:regeneration_criterion_en}
\end{equation}
When Eq.~\eqref{eq:regeneration_criterion_en} holds, $B$ is regenerated according to the competing model with the shorter description length. Accordingly, $M_2$ produces two sub-balls, whereas $M_3$ produces a core ball together with a set of residual samples.

\subsection{Local Regeneration and Residual Reassignment}

If $M^\star(B)=M_1$, the current ball is added to the stable set. If $M^\star(B)=M_2$, it is regenerated into two sub-balls, which are appended to the processing queue. If $M^\star(B)=M_3$, the core ball is returned to the processing queue, while the peeled samples are appended to a residual queue in their radial order.

After all stable balls have been obtained, the residual samples are processed sequentially according to their order in the residual queue. For a residual sample $x$ and a stable ball $B_j$, the attachment cost is defined as
\begin{equation}
	\Delta_j(x)
	=
	L_1\bigl(X_{B_j}\cup\{x\}\bigr)
	-
	L_1(X_{B_j}),
	\label{eq:attach_cost_en}
\end{equation}
which measures the increase in description length when $x$ is incorporated into $B_j$. Alternatively, $x$ can be encoded by the background model, with cost
\begin{equation}
	\Delta_{\mathrm{bg}}(x)=\log V_{\mathrm{bg}},
	\label{eq:bg_cost_en}
\end{equation}
where $V_{\mathrm{bg}}$ denotes the volume of the global bounding region of the input data. Let
\begin{equation}
	j^\star=\arg\min_j\Delta_j(x).
\end{equation}
The sample $x$ is assigned to $B_{j^\star}$ when
\begin{equation}
	\Delta_{j^\star}(x)<\Delta_{\mathrm{bg}}(x);
\end{equation}
otherwise, it is retained in the residual set. Once a sample is assigned, the statistics of the corresponding stable ball are updated, and the remaining residual samples are evaluated against the updated granular-ball representation. Ties are resolved according to the fixed ordering of the stable balls. After all residual samples have been processed, the remaining residual samples, if any, are collected into an additional residual ball.

\subsection{Overall Procedure}

The overall workflow of MDL-GBG is illustrated in Fig.~\ref{fig:framework} and summarized in Algorithm~\ref{alg:mdlgbg}. A coarse initial granular-ball set is first constructed by recursive approximate farthest-point bisection. Each ball is then evaluated under the single-ball, two-ball, and core-ball-residual models. The ball is retained when $M_1$ gives the shortest description length; otherwise, it is regenerated according to $M_2$ or $M_3$. Samples separated by $M_3$ are collected in a residual pool and reconsidered after the stable granular balls have been obtained.

The granular balls are processed through a first-in-first-out queue. Each accepted regeneration produces smaller subsets, while $n_{\min}$ excludes inadmissibly small descendants. The procedure therefore terminates after a finite number of local model comparisons.

\begin{figure}[!t]
	\centering
	\includegraphics[width=\textwidth]{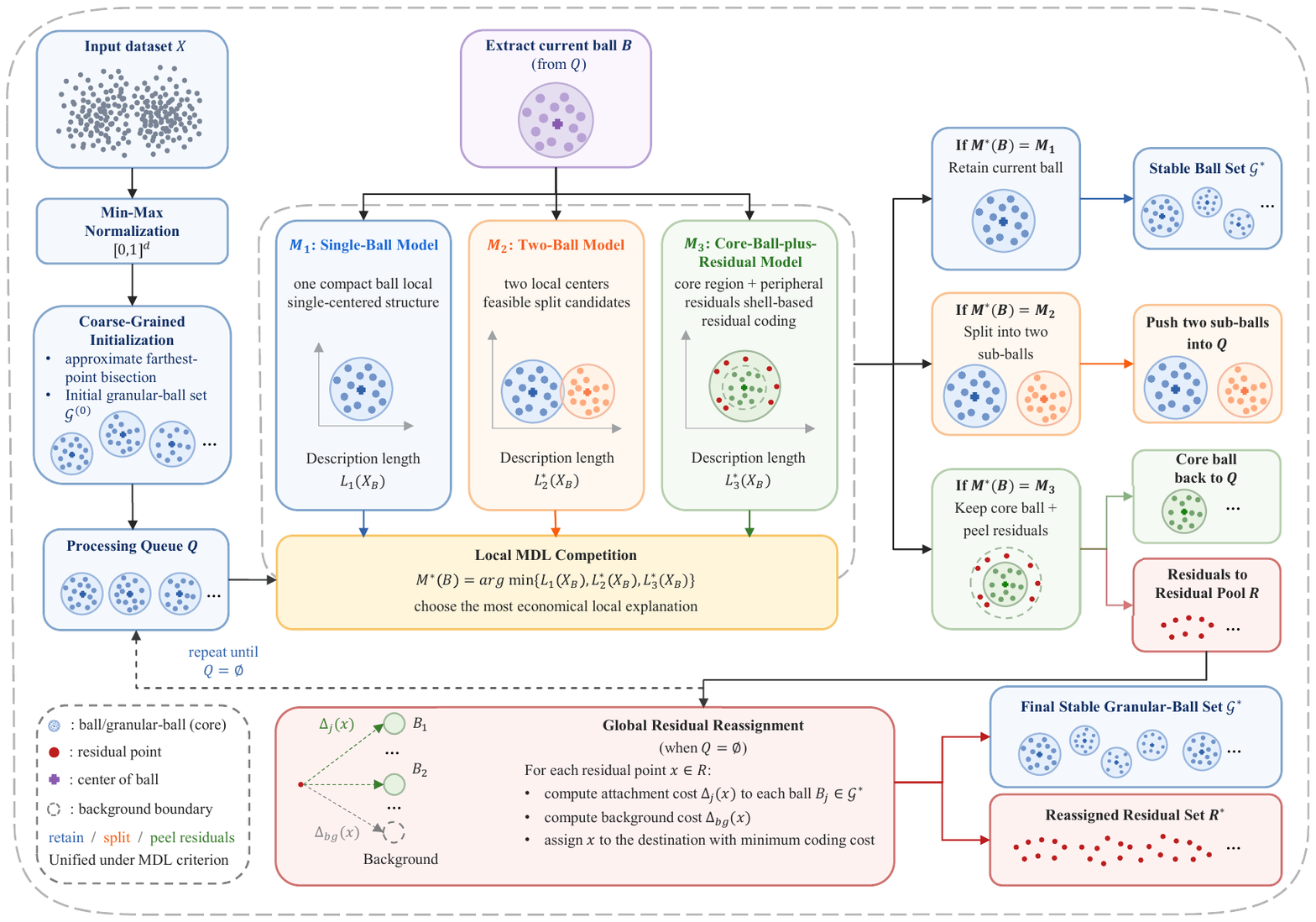}
	\caption{Overall framework of MDL-GBG. The generation process is driven by local MDL competition among the single-ball, two-ball, and core-ball-residual models, followed by residual reassignment after stable granular balls are obtained.}
	\label{fig:framework}
\end{figure}

\begin{algorithm}[!t]
	\caption{MDL-GBG}
	\label{alg:mdlgbg}
	\KwIn{Dataset $X\subseteq\mathbb{R}^d$}
	\KwOut{Final stable granular-ball set $\mathcal{G}^\star$ and reassigned residual set $\mathcal{R}^\star$}
	
	\textbf{Initialization:} \\
	Generate the coarse-grained initial granular-ball set $\mathcal{G}^{(0)}$ by recursive approximate farthest-point bisection until the number of initial balls reaches $k_0$ \;
	$\mathcal{Q}\gets \mathcal{G}^{(0)}$, 
	$\mathcal{G}^\star\gets \emptyset$, 
	$\mathcal{R}\gets \emptyset$ \;
	
	\While{$\mathcal{Q}\neq \emptyset$}{
		Extract a current granular ball $B$ from $\mathcal{Q}$ \;
		Compute $L_1(X_B)$ \;
		Enumerate all feasible candidate splits in $\mathcal{P}_2(B)$ and obtain $L_2^\star(X_B)$ \;
		Enumerate all feasible residual sizes in $\mathcal{Q}_B$ and obtain $L_3^\star(X_B)$ \;
		$M^\star(B)\gets \arg\min\{L_1(X_B),L_2^\star(X_B),L_3^\star(X_B)\}$ \;
		
		\eIf{$M^\star(B)=M_1$}{
			Add $B$ to $\mathcal{G}^\star$ \;
		}{
			\eIf{$M^\star(B)=M_2$}{
				Regenerate $B$ into two sub-balls $B_1,B_2$ \;
				Push $B_1$ and $B_2$ into $\mathcal{Q}$ \;
			}{
				Regenerate $B$ into one core ball $B_c$ and one residual set $R_B$ \;
				Push $B_c$ into $\mathcal{Q}$ \;
				$\mathcal{R}\gets \mathcal{R}\cup R_B$ \;
			}
		}
	}
	
	\ForEach{$x\in \mathcal{R}$}{
		Compute the attachment cost $\Delta_j(x)$ for each stable granular ball $B_j$ \;
		Compute the background cost $\Delta_{\mathrm{bg}}(x)$ \;
		\eIf{$\min_j\Delta_j(x)<\Delta_{\mathrm{bg}}(x)$}{
			Assign $x$ to the stable ball with the smallest attachment cost \;
			Update the statistics of the selected stable ball \;
		}{
			Retain $x$ in the residual-background set $\mathcal{R}^{\star}$ \;
		}
	}
	
	\Return{$\mathcal{G}^\star,\mathcal{R}^\star$}
	
\end{algorithm}

\section{Computational Complexity Analysis}
\label{sec:complexity}

Let $n$ denote the number of samples and $d$ the feature dimension. For a current granular ball $B$, let $n_B$ be the number of samples contained in $B$. The cost of evaluating the single-ball model is $\mathcal{O}(n_Bd)$, since only the local mean, squared residuals, and the corresponding coding cost need to be computed.

For the two-ball model, candidate generation starts from the first principal-component direction of the samples in $B$. Computing this direction costs $\mathcal{O}(n_Bd^2)$ when a covariance-based implementation is used. Projecting the samples and sorting the projected values require $\mathcal{O}(n_Bd+n_B\log n_B)$. Once the sorted order is obtained, prefix sums of coordinates and squared norms are used to evaluate the coding costs of candidate sub-balls, so feasible cuts can be evaluated without repeatedly scanning the two induced subsets. If $S_B$ denotes the number of feasible cut positions, then $S_B\le n_B$, and the cost of the two-ball model is
\begin{equation}
	T_{M_2}(B)
	=
	\mathcal{O}\bigl(n_Bd^2+n_Bd+n_B\log n_B+S_Bd\bigr)
	=
	\mathcal{O}\bigl(n_Bd^2+n_Bd+n_B\log n_B\bigr).
	\label{eq:complexity_m2_short}
\end{equation}

For the core-ball-residual model, the samples are first sorted according to their distances from the current ball center, which costs $\mathcal{O}(n_Bd+n_B\log n_B)$. Candidate cores are then obtained by progressively peeling peripheral samples. Prefix sufficient statistics allow the likelihood-related part of the single-ball coding cost to be updated efficiently for each candidate core. However, the radius of each candidate core depends on its updated center and sample composition, and therefore still requires candidate-level geometric evaluation. If $R_B$ denotes the number of feasible residual sizes, with $R_B\le n_B$, the cost of this model is
\begin{equation}
	T_{M_3}(B)
	=
	\mathcal{O}\bigl(n_Bd+n_B\log n_B+R_Bn_Bd\bigr)
	=
	\mathcal{O}\bigl(n_B^2d+n_B\log n_B\bigr).
	\label{eq:complexity_m3_short}
\end{equation}

Combining the above components, the local MDL evaluation cost for one current granular ball is
\begin{equation}
	T_B
	=
	\mathcal{O}\bigl(n_Bd^2+n_B^2d+n_B\log n_B\bigr).
	\label{eq:local_complexity_short}
\end{equation}

At the global level, the coarse-grained initialization based on approximate farthest-point bisection has a conservative upper bound of $\mathcal{O}(n^{3/2}d)$. During local regeneration, different granular balls are processed sequentially. If $n_{B_1},n_{B_2},\ldots$ denote the sizes of all processed balls, the total refinement cost can be written as
\begin{equation}
	\sum_B
	\mathcal{O}\bigl(n_Bd^2+n_B^2d+n_B\log n_B\bigr).
	\label{eq:refinement_sum_complexity}
\end{equation}

The total refinement cost depends on how sample subsets are distributed across the regeneration process. When regeneration is approximately balanced, the processed balls are distributed over logarithmically many levels, and summing Eq.~\eqref{eq:refinement_sum_complexity} over these levels gives
\begin{equation}
	T_{\mathrm{balanced}}
	=
	\mathcal{O}\bigl(nd^2\log n+n^2d+n\log^2 n\bigr).
	\label{eq:balanced_complexity_short}
\end{equation}
In contrast, if regeneration is highly unbalanced, a sequence of large balls may be evaluated before termination. A conservative upper bound in this case is
\begin{equation}
	T_{\mathrm{worst}}
	=
	\mathcal{O}\bigl(n^2d^2+n^3d+n^2\log n\bigr).
	\label{eq:worst_complexity_short}
\end{equation}

After the stable granular balls have been generated, residual reassignment compares each residual point with all stable balls. If $R$ residual points and $m^\star$ stable granular balls are obtained, this step costs $\mathcal{O}(Rm^\star d)$, which is bounded by $\mathcal{O}(n^2d)$. The subsequent mapping from stable-ball labels to sample labels is linear in the number of samples once the ball memberships are available. If an additional downstream clustering algorithm is applied to the stable-ball centers, its cost depends on the selected clustering backend and is not part of the intrinsic generation cost of MDL-GBG.

The space complexity of MDL-GBG is
\begin{equation}
	S=\mathcal{O}(nd+d^2),
	\label{eq:total_space_complexity}
\end{equation}
where $\mathcal{O}(nd)$ is used for storing the data, ball memberships, queues, residual pools, and local working arrays, while $\mathcal{O}(d^2)$ is used for principal-direction computation. Overall, the dominant computational cost comes from repeated local model evaluation. In particular, the core-ball-residual model is usually the most expensive component because each feasible residual size induces a different candidate core and residual shell.

\section{Experiments}
\label{sec:experiments}

\subsection{Experimental Settings}

Experiments were conducted on 20 UCI datasets\footnote{\url{https://archive.ics.uci.edu/}} with different sample sizes, feature dimensions, and numbers of classes, as summarized in Table~\ref{tab:dataset_info}. Before clustering, each nonconstant feature was rescaled to $[0,1]$ by Min--Max normalization:
\begin{equation}
	x'_{ij}
	=
	\frac{x_{ij}-\min(x_{\cdot j})}
	{\max(x_{\cdot j})-\min(x_{\cdot j})}.
	\label{eq:minmax_en}
\end{equation}
Constant features were mapped to zero. The same normalized data were supplied to all compared methods.

The granular balls generated by MDL-GBG were evaluated with Agglomerative Clustering (AC) and KMeans++~\cite{arthur2007k}. Let $m^\star$ denote the number of final granular balls after residual reassignment and $K_c$ the prescribed number of clusters. When $m^\star>K_c$, the ball centers were grouped into $K_c$ clusters. When $m^\star\le K_c$, the granular-ball identifiers were directly used as the ball-level clustering result. The resulting ball-level labels were then mapped to the samples according to the final sample-to-ball assignments. Ground-truth labels were used only to determine $K_c$ and to calculate the evaluation metrics.

The comparison included the sample-level AC and KMeans++ baselines and four granular-ball clustering methods: GBCT\footnote{\url{https://github.com/wylbdthxbw/GBC}}~\cite{Xia2025_GBCT}, GBSC\footnote{\url{https://github.com/xjnine/GBSC}}~\cite{Xie2023_GBSC}, WGBC\footnote{\url{https://github.com/xjnine/W-GBC}}~\cite{xie2024_WGBC}, and MDMSC\footnote{\url{https://github.com/SWJTU-ML/MDMSC}}~\cite{xu2025_MDMSC}. Methods involving random initialization or stochastic optimization were evaluated over 20 independent runs with different random seeds, and their results are reported as mean $\pm$ standard deviation. Deterministic methods were executed once. The ablation and parameter-sensitivity experiments used the deterministic MDL-GBG+AC pipeline.

AC and KMeans++ were implemented using scikit-learn~\cite{pedregosa2011scikit}, while the official implementations were used for the remaining comparison methods. Recommended or default settings were adopted without dataset-specific parameter tuning. For MDL-GBG, $k_0$ and $n_{\min}$ were determined by Eqs.~\eqref{eq:k0_en} and \eqref{eq:nmin_en}, respectively, and $\beta=2$ was used in the main experiments. The influence of $\beta$ is examined in Section~\ref{subsec:shell_sensitivity}.

All experiments were conducted in Python 3.12 under Windows 11 on a system equipped with an AMD Ryzen 9 8940HX CPU, an NVIDIA GeForce RTX 5070Ti GPU with 12 GB VRAM, and 32 GB RAM. The source code is publicly available at \url{https://github.com/AnonymousUser0816/MDL-GBG}.

\begin{table}[!t]
	\centering
	\caption{Dataset information and abbreviations.}
	\label{tab:dataset_info}
	\scriptsize
	\setlength{\tabcolsep}{4pt}
	\renewcommand{\arraystretch}{0.95}
	\begin{tabular}{llrrr}
		\toprule
		Abbr. & Dataset & Samples & Features & Classes \\
		\midrule
		Zoo & Zoo & 101 & 16 & 7 \\
		Iris & Iris & 150 & 4 & 3 \\
		Wine & Wine & 178 & 13 & 3 \\
		Glass & Glass Identification & 214 & 9 & 6 \\
		Heart & Statlog (Heart) & 270 & 13 & 2 \\
		Ecoli & Ecoli & 336 & 7 & 8 \\
		Australian & Statlog (Australian Credit Approval) & 690 & 14 & 2 \\
		Segment & Statlog (Image Segmentation) & 2310 & 19 & 7 \\
		Churn & Iranian Churn & 3150 & 13 & 2 \\
		Rice & Rice (Cammeo and Osmancik) & 3810 & 7 & 2 \\
		Waveform & Waveform Database Generator (Version 1) & 5000 & 21 & 3 \\
		Digits & Optical Recognition of Handwritten Digits & 5620 & 64 & 10 \\
		Landsat & Statlog (Landsat Satellite) & 6435 & 36 & 6 \\
		Isolet & Isolet & 7797 & 617 & 26 \\
		PenDigits & Pen-Based Recognition of Handwritten Digits & 10992 & 16 & 10 \\
		Bean & Dry Bean & 13611 & 16 & 7 \\
		HTRU2 & HTRU2 & 17898 & 8 & 2 \\
		Shuttle & Statlog (Shuttle) & 58000 & 7 & 7 \\
		CDC & CDC Diabetes Health Indicators & 253680 & 21 & 2 \\
		Crop & Crop Mapping Using Fused Optical-Radar Data & 325834 & 174 & 7 \\
		\bottomrule
	\end{tabular}
\end{table}

\subsection{Evaluation Metrics}

The clustering performance was evaluated using the adjusted Rand index (ARI), clustering accuracy (ACC), and normalized mutual information (NMI). These metrics characterize the agreement between the predicted partition and the ground-truth classes from complementary perspectives.

Let
$
\mathbf{y}=\{y_1,\dots,y_n\}
$
and
$
\hat{\mathbf{y}}=\{\hat y_1,\dots,\hat y_n\}
$
denote the ground-truth and predicted labels, respectively. ACC measures the proportion of correctly assigned samples after resolving the permutation ambiguity of cluster labels:
\begin{equation}
	\mathrm{ACC}
	=
	\frac{1}{n}
	\max_{\phi}
	\sum_{i=1}^{n}
	\mathbb{I}\bigl(y_i=\phi(\hat y_i)\bigr),
	\label{eq:acc_en}
\end{equation}
where $\phi$ denotes a one-to-one mapping between predicted cluster labels and ground-truth classes, obtained using the Hungarian algorithm.

ARI evaluates the agreement between two partitions based on sample pairs while correcting for agreement expected by chance. Let $n_{uv}$ denote the number of samples shared by the $u$-th ground-truth class and the $v$-th predicted cluster, with
$
a_u=\sum_v n_{uv}
$
and
$
b_v=\sum_u n_{uv}.
$
ARI is defined as
\begin{equation}
	\mathrm{ARI}
	=
	\frac{
		\displaystyle
		\sum_{u,v}\binom{n_{uv}}{2}
		-
		\frac{
			\displaystyle
			\sum_u\binom{a_u}{2}
			\sum_v\binom{b_v}{2}
		}{
			\binom{n}{2}
		}
	}{
		\displaystyle
		\frac{1}{2}
		\left[
		\sum_u\binom{a_u}{2}
		+
		\sum_v\binom{b_v}{2}
		\right]
		-
		\frac{
			\displaystyle
			\sum_u\binom{a_u}{2}
			\sum_v\binom{b_v}{2}
		}{
			\binom{n}{2}
		}
	}.
	\label{eq:ari_en}
\end{equation}

NMI measures the information shared by the two partitions and is defined as
\begin{equation}
	\mathrm{NMI}
	=
	\frac{
		2I(\mathbf{y};\hat{\mathbf{y}})
	}{
		H(\mathbf{y})+H(\hat{\mathbf{y}})
	},
	\label{eq:nmi_en}
\end{equation}
where $I(\mathbf{y};\hat{\mathbf{y}})$ is their mutual information and $H(\mathbf{y})$ and $H(\hat{\mathbf{y}})$ are the corresponding entropies.

Higher values of ARI, ACC, and NMI indicate better agreement with the reference partition. Since ACC can be influenced by dominant classes, particularly on imbalanced datasets, the three metrics are considered jointly in the experimental analysis.

\subsection{Overall Clustering Performance}

Table~\ref{tab:all_metrics_comparison} reports the dataset-level comparison of ARI, ACC, and NMI on the 20 UCI datasets. The last rows summarize the average metric values computed from valid numerical results. This table provides the primary numerical comparison, while the rank-based statistical evidence is reported separately in Section~\ref{subsec:statistical_significance}.

\begin{table}[!t]
	\centering
	\caption{Dataset-level comparison of ARI, ACC, and NMI on the 20 UCI datasets.}
	\label{tab:all_metrics_comparison}
	\begin{threeparttable}
		\scriptsize
		\renewcommand{\arraystretch}{0.8}
		\resizebox{\textwidth}{!}{
			\begin{tabular}{llcccccccc}
				\toprule
				\multirow{2}{*}{DataSet} & \multirow{2}{*}{Metric} & \multicolumn{2}{c}{Ours} & \multicolumn{2}{c}{Baseline} & \multicolumn{4}{c}{Granular-Ball Methods} \\
				\cmidrule(lr){3-4}
				\cmidrule(lr){5-6}
				\cmidrule(lr){7-10}
				& & \makecell{MDL-GBG\\+AC} & \makecell{MDL-GBG\\+KM++} & AC & KM++ & \makecell{WGBC\\ (ICDE 2024)} & \makecell{MDMSC\\ (AAAI 2025)} & \makecell{GBSC\\ (TKDE 2023)} & \makecell{GBCT\\ (TNNLS 2025)} \\
				\midrule
				\multirow{3}{*}{Zoo}
				& ARI & 0.8277 & 0.8151 $\pm$ 0.0563 & 0.6159 & 0.6625 $\pm$ 0.0913 & \textbf{0.9403 $\pm$ 0.0228} & 0.5442 & 0.1927 $\pm$ 0.1441 & 0.5333 \\
				& ACC & 0.8614 & 0.8530 $\pm$ 0.0376 & 0.7525 & 0.7560 $\pm$ 0.0614 & \textbf{0.9069 $\pm$ 0.0188} & 0.6634 & 0.4346 $\pm$ 0.0724 & 0.6238 \\
				& NMI & 0.8789 & 0.8738 $\pm$ 0.0227 & 0.7403 & 0.7591 $\pm$ 0.0385 & \textbf{0.9026 $\pm$ 0.0124} & 0.7515 & 0.4255 $\pm$ 0.1068 & 0.6026 \\
				\midrule
				\multirow{3}{*}{Iris}
				& ARI & 0.6956 & 0.6809 $\pm$ 0.0366 & 0.7592 & 0.7219 $\pm$ 0.0070 & 0.7727 $\pm$ 0.0227 & \textbf{0.9038} & 0.6537 $\pm$ 0.0000 & 0.5681 \\
				& ACC & 0.8733 & 0.8646 $\pm$ 0.0224 & 0.9067 & 0.8893 $\pm$ 0.0033 & 0.9133 $\pm$ 0.0106 & \textbf{0.9667} & 0.8467 $\pm$ 0.0000 & 0.6667 \\
				& NMI & 0.7709 & 0.7516 $\pm$ 0.0430 & 0.8057 & 0.7484 $\pm$ 0.0082 & 0.7852 $\pm$ 0.0090 & \textbf{0.8851} & 0.7490 $\pm$ 0.0000 & 0.7337 \\
				\midrule
				\multirow{3}{*}{Wine}
				& ARI & \textbf{0.9472} & 0.8431 $\pm$ 0.1430 & 0.2926 & 0.3628 $\pm$ 0.0153 & 0.8732 $\pm$ 0.0147 & 0.7414 & 0.5522 $\pm$ 0.0000 & 0.7033 \\
				& ACC & \textbf{0.9831} & 0.9385 $\pm$ 0.0833 & 0.6124 & 0.6643 $\pm$ 0.0595 & 0.9567 $\pm$ 0.0055 & 0.9101 & 0.8146 $\pm$ 0.0000 & 0.8933 \\
				& NMI & \textbf{0.9275} & 0.8372 $\pm$ 0.1174 & 0.4049 & 0.4257 $\pm$ 0.0063 & 0.8569 $\pm$ 0.0114 & 0.7528 & 0.6430 $\pm$ 0.0000 & 0.7397 \\
				\midrule
				\multirow{3}{*}{Glass}
				& ARI & \textbf{0.2937} & 0.2210 $\pm$ 0.0314 & 0.0198 & 0.2584 $\pm$ 0.0157 & 0.2084 $\pm$ 0.0213 & 0.1651 & 0.1426 $\pm$ 0.0003 & 0.0396 \\
				& ACC & 0.5234 & 0.4836 $\pm$ 0.0221 & 0.3785 & \textbf{0.5346 $\pm$ 0.0194} & 0.4771 $\pm$ 0.0171 & 0.4112 & 0.4206 $\pm$ 0.0000 & 0.4065 \\
				& NMI & \textbf{0.4638} & 0.3888 $\pm$ 0.0408 & 0.1145 & 0.4030 $\pm$ 0.0224 & 0.3664 $\pm$ 0.0246 & 0.3595 & 0.2822 $\pm$ 0.0014 & 0.1608 \\
				\midrule
				\multirow{3}{*}{Heart}
				& ARI & \textbf{0.3757} & 0.3107 $\pm$ 0.0734 & -0.0010 & 0.0283 $\pm$ 0.0012 & 0.1668 $\pm$ 0.0123 & 0.1025 & 0.0284 $\pm$ 0.0000 & 0.0241 \\
				& ACC & \textbf{0.8074} & 0.7759 $\pm$ 0.0485 & 0.5519 & 0.5898 $\pm$ 0.0016 & 0.7063 $\pm$ 0.0069 & 0.6630 & 0.5926 $\pm$ 0.0000 & 0.5926 \\
				& NMI & \textbf{0.2917} & 0.2435 $\pm$ 0.0565 & 0.0002 & 0.0188 $\pm$ 0.0007 & 0.1294 $\pm$ 0.0144 & 0.1032 & 0.0723 $\pm$ 0.0000 & 0.0395 \\
				\midrule
				\multirow{3}{*}{Ecoli}
				& ARI & 0.6866 & 0.5052 $\pm$ 0.0728 & \textbf{0.7449} & 0.4289 $\pm$ 0.0492 & 0.6991 $\pm$ 0.0542 & 0.4930 & 0.3211 $\pm$ 0.0374 & 0.5205 \\
				& ACC & \textbf{0.7917} & 0.6374 $\pm$ 0.0630 & 0.7649 & 0.5632 $\pm$ 0.0439 & 0.7641 $\pm$ 0.0449 & 0.6339 & 0.5036 $\pm$ 0.0203 & 0.6607 \\
				& NMI & 0.6773 & 0.6126 $\pm$ 0.0203 & \textbf{0.7193} & 0.6034 $\pm$ 0.0264 & 0.6580 $\pm$ 0.0293 & 0.6218 & 0.4765 $\pm$ 0.0353 & 0.5696 \\
				\midrule
				\multirow{3}{*}{Australian}
				& ARI & 0.4277 & 0.3414 $\pm$ 0.1031 & 0.0007 & 0.0031 $\pm$ 0.0008 & \textbf{0.4802 $\pm$ 0.0353} & 0.0215 & 0.0859 $\pm$ 0.0815 & -0.0030 \\
				& ACC & 0.8275 & 0.7882 $\pm$ 0.0517 & 0.5565 & 0.5610 $\pm$ 0.0015 & \textbf{0.8465 $\pm$ 0.0135} & 0.5826 & 0.6338 $\pm$ 0.0708 & 0.5348 \\
				& NMI & 0.3533 & 0.2754 $\pm$ 0.0909 & 0.0034 & 0.0134 $\pm$ 0.0032 & \textbf{0.4049 $\pm$ 0.0337} & 0.0133 & 0.1115 $\pm$ 0.0813 & 0.0676 \\
				\midrule
				\multirow{3}{*}{Segment}
				& ARI & 0.4735 & \textbf{0.4888 $\pm$ 0.0424} & 0.0001 & 0.3381 $\pm$ 0.0576 & 0.0000 $\pm$ 0.0000 & 0.0811 & 0.4754 $\pm$ 0.0142 & 0.3747 \\
				& ACC & 0.5814 & \textbf{0.6261 $\pm$ 0.0532} & 0.1459 & 0.5027 $\pm$ 0.0589 & 0.1429 $\pm$ 0.0000 & 0.2584 & 0.6119 $\pm$ 0.0187 & 0.5346 \\
				& NMI & \textbf{0.6776} & 0.6190 $\pm$ 0.0262 & 0.0153 & 0.5094 $\pm$ 0.0478 & 0.0000 $\pm$ 0.0000 & 0.3338 & 0.6379 $\pm$ 0.0199 & 0.5732 \\
				\midrule
				\multirow{3}{*}{Churn}
				& ARI & -0.0209 & 0.3663 $\pm$ 0.0263 & -0.0834 & -0.1074 $\pm$ 0.0000 & \textbf{0.3941 $\pm$ 0.0348} & 0.0086 & 0.3577 $\pm$ 0.0000 & 0.3377 \\
				& ACC & 0.8289 & 0.8364 $\pm$ 0.0213 & 0.7622 & 0.6680 $\pm$ 0.0007 & 0.8645 $\pm$ 0.0327 & 0.4987 & 0.8295 $\pm$ 0.0000 & \textbf{0.8832} \\
				& NMI & 0.0095 & 0.2260 $\pm$ 0.0286 & 0.0403 & 0.0738 $\pm$ 0.0003 & \textbf{0.2482 $\pm$ 0.0304} & 0.0064 & 0.2167 $\pm$ 0.0000 & 0.2322 \\
				\midrule
				\multirow{3}{*}{Rice}
				& ARI & 0.5808 & \textbf{0.6881 $\pm$ 0.0020} & 0.5413 & 0.5776 $\pm$ 0.0004 & 0.6755 $\pm$ 0.0000 & 0.6753 & 0.6824 $\pm$ 0.0000 & -0.0021 \\
				& ACC & 0.8814 & \textbf{0.9148 $\pm$ 0.0006} & 0.8682 & 0.8802 $\pm$ 0.0001 & 0.9110 $\pm$ 0.0000 & 0.9110 & 0.9131 $\pm$ 0.0000 & 0.5677 \\
				& NMI & 0.5002 & \textbf{0.5748 $\pm$ 0.0023} & 0.4633 & 0.4690 $\pm$ 0.0004 & 0.5632 $\pm$ 0.0000 & 0.5709 & 0.5694 $\pm$ 0.0000 & 0.0052 \\
				\midrule
				\multirow{3}{*}{Waveform}
				& ARI & \textbf{0.4209} & 0.3566 $\pm$ 0.0921 & 0.2980 & 0.2651 $\pm$ 0.0513 & 0.3770 $\pm$ 0.0932 & 0.2512 & 0.2518 $\pm$ 0.0000 & 0.1181 \\
				& ACC & \textbf{0.7418} & 0.6496 $\pm$ 0.1001 & 0.5574 & 0.5154 $\pm$ 0.0613 & 0.6687 $\pm$ 0.1252 & 0.5224 & 0.5150 $\pm$ 0.0000 & 0.5110 \\
				& NMI & \textbf{0.4529} & 0.4211 $\pm$ 0.0449 & 0.3878 & 0.3687 $\pm$ 0.0292 & 0.4193 $\pm$ 0.0419 & 0.3592 & 0.3702 $\pm$ 0.0000 & 0.2312 \\
				\midrule
				\multirow{3}{*}{Digits}
				& ARI & 0.6326 & 0.6511 $\pm$ 0.0502 & 0.5703 & 0.6297 $\pm$ 0.0442 & 0.3191 $\pm$ 0.0170 & \textbf{0.8333} & 0.2876 $\pm$ 0.0017 & 0.0003 \\
				& ACC & 0.7418 & 0.7600 $\pm$ 0.0566 & 0.6349 & 0.7493 $\pm$ 0.0476 & 0.4979 $\pm$ 0.0267 & \textbf{0.8776} & 0.4376 $\pm$ 0.0030 & 0.1181 \\
				& NMI & 0.7738 & 0.7438 $\pm$ 0.0277 & 0.7617 & 0.7332 $\pm$ 0.0211 & 0.4824 $\pm$ 0.0154 & \textbf{0.9029} & 0.4597 $\pm$ 0.0044 & 0.0426 \\
				\midrule
				\multirow{3}{*}{Landsat}
				& ARI & 0.3583 & 0.5264 $\pm$ 0.0111 & 0.2734 & 0.4883 $\pm$ 0.0739 & \textbf{0.5608 $\pm$ 0.0171} & 0.5599 & 0.4819 $\pm$ 0.0027 & 0.0005 \\
				& ACC & 0.6117 & 0.6732 $\pm$ 0.0104 & 0.4923 & 0.6505 $\pm$ 0.0604 & \textbf{0.7247 $\pm$ 0.0176} & 0.7120 & 0.7157 $\pm$ 0.0013 & 0.2415 \\
				& NMI & 0.5115 & 0.6090 $\pm$ 0.0028 & 0.4264 & 0.5837 $\pm$ 0.0501 & 0.6100 $\pm$ 0.0103 & \textbf{0.6567} & 0.5748 $\pm$ 0.0027 & 0.0100 \\
				\midrule
				\multirow{3}{*}{Isolet}
				& ARI & 0.4471 & 0.4585 $\pm$ 0.0236 & 0.1460 & \textbf{0.4751 $\pm$ 0.0268} & 0.1922 $\pm$ 0.0069 & 0.4599 & 0.1560 $\pm$ 0.0132 & 0.0078 \\
				& ACC & 0.5153 & 0.5303 $\pm$ 0.0237 & 0.2138 & 0.5425 $\pm$ 0.0273 & 0.3065 $\pm$ 0.0076 & \textbf{0.5639} & 0.2454 $\pm$ 0.0072 & 0.1075 \\
				& NMI & 0.7315 & 0.7194 $\pm$ 0.0135 & 0.5799 & 0.7224 $\pm$ 0.0093 & 0.4671 $\pm$ 0.0086 & \textbf{0.7600} & 0.4236 $\pm$ 0.0113 & 0.2159 \\
				\midrule
				\multirow{3}{*}{PenDigits}
				& ARI & 0.4410 & \textbf{0.5720 $\pm$ 0.0320} & 0.4139 & 0.5609 $\pm$ 0.0281 & 0.0000 $\pm$ 0.0000 & 0.4107 & 0.4247 $\pm$ 0.0002 & 0.0014 \\
				& ACC & 0.5977 & \textbf{0.7166 $\pm$ 0.0368} & 0.5548 & 0.7063 $\pm$ 0.0455 & 0.1041 $\pm$ 0.0000 & 0.5759 & 0.5741 $\pm$ 0.0001 & 0.1398 \\
				& NMI & 0.6617 & 0.6995 $\pm$ 0.0144 & 0.6517 & 0.6827 $\pm$ 0.0084 & 0.0000 $\pm$ 0.0000 & \textbf{0.7001} & 0.5970 $\pm$ 0.0004 & 0.0775 \\
				\midrule
				\multirow{3}{*}{Bean}
				& ARI & 0.5324 & 0.6079 $\pm$ 0.0256 & 0.2487 & 0.3800 $\pm$ 0.0122 & 0.0000 $\pm$ 0.0000 & 0.6303 & \textbf{0.6679 $\pm$ 0.0000} & 0.0978 \\
				& ACC & 0.6545 & 0.7355 $\pm$ 0.0362 & 0.3978 & 0.5572 $\pm$ 0.0151 & 0.2605 $\pm$ 0.0000 & 0.7598 & \textbf{0.7969 $\pm$ 0.0000} & 0.3377 \\
				& NMI & 0.6983 & 0.6965 $\pm$ 0.0097 & 0.4511 & 0.5150 $\pm$ 0.0031 & 0.0000 $\pm$ 0.0000 & \textbf{0.7329} & 0.7274 $\pm$ 0.0000 & 0.1868 \\
				\midrule
				\multirow{3}{*}{HTRU2}
				& ARI & \textbf{0.6201} & 0.5233 $\pm$ 0.0000 & -0.0006 & -0.0780 $\pm$ 0.0002 & 0.3376 $\pm$ 0.0004 & 0.5272 & 0.6049 $\pm$ 0.0002 & 0.0324 \\
				& ACC & \textbf{0.9549} & 0.9171 $\pm$ 0.0000 & 0.9076 & 0.7659 $\pm$ 0.0011 & 0.9044 $\pm$ 0.0001 & 0.9161 & 0.9383 $\pm$ 0.0000 & 0.9103 \\
				& NMI & \textbf{0.4943} & 0.3299 $\pm$ 0.0000 & 0.0000 & 0.0267 $\pm$ 0.0002 & 0.1528 $\pm$ 0.0003 & 0.3406 & 0.3971 $\pm$ 0.0002 & 0.0277 \\
				\midrule
				\multirow{3}{*}{Shuttle}
				& ARI & \textbf{0.5695} & 0.1843 $\pm$ 0.0406 & 0.0007 & 0.3802 $\pm$ 0.1351 & 0.0000 $\pm$ 0.0000 & -0.0098 & 0.1852 $\pm$ 0.0000 & 0.4534 \\
				& ACC & 0.7603 & 0.4246 $\pm$ 0.0655 & 0.7859 & \textbf{0.8047 $\pm$ 0.0279} & 0.7860 $\pm$ 0.0000 & 0.3077 & 0.7330 $\pm$ 0.0000 & 0.7662 \\
				& NMI & \textbf{0.5800} & 0.3878 $\pm$ 0.0244 & 0.0007 & 0.2084 $\pm$ 0.0754 & 0.0000 $\pm$ 0.0000 & 0.0086 & 0.3749 $\pm$ 0.0000 & 0.4784 \\
				\midrule
				\multirow{3}{*}{CDC}
				& ARI & \textbf{0.1550} & 0.0825 $\pm$ 0.0169 & ME & 0.1138 $\pm$ 0.0064 & 0.0846 $\pm$ 0.0139 & ME & ME & 0.0032 \\
				& ACC & 0.8024 & 0.6515 $\pm$ 0.0231 & ME & 0.7971 $\pm$ 0.0086 & 0.6675 $\pm$ 0.0324 & ME & ME & \textbf{0.8548} \\
				& NMI & 0.0443 & \textbf{0.0762 $\pm$ 0.0149} & ME & 0.0255 $\pm$ 0.0022 & 0.0701 $\pm$ 0.0161 & ME & ME & 0.0002 \\
				\midrule
				\multirow{3}{*}{Crop}
				& ARI & \textbf{0.7017} & 0.5737 $\pm$ 0.0228 & ME & 0.1298 $\pm$ 0.0186 & 0.0000 $\pm$ 0.0000 & ME & ME & 0.0657 \\
				& ACC & \textbf{0.7718} & 0.6486 $\pm$ 0.0193 & ME & 0.3482 $\pm$ 0.0129 & 0.2611 $\pm$ 0.0000 & ME & ME & 0.3209 \\
				& NMI & \textbf{0.7581} & 0.6704 $\pm$ 0.0085 & ME & 0.2258 $\pm$ 0.0258 & 0.0000 $\pm$ 0.0000 & ME & ME & 0.1373 \\
				\midrule
				\multirow{3}{*}{Average}
				& ARI & \textbf{0.5083} & 0.4898 $\pm$ 0.0451 & 0.2689 & 0.3310 $\pm$ 0.0318 & 0.3541 $\pm$ 0.0183 & 0.4111 & 0.3640 $\pm$ 0.0164 & 0.1938 \\
				& ACC & \textbf{0.7556} & 0.7213 $\pm$ 0.0388 & 0.6025 & 0.6523 $\pm$ 0.0279 & 0.6335 $\pm$ 0.0180 & 0.6519 & 0.6421 $\pm$ 0.0108 & 0.5336 \\
				& NMI & \textbf{0.5629} & 0.5378 $\pm$ 0.0305 & 0.3648 & 0.4058 $\pm$ 0.0190 & 0.3558 $\pm$ 0.0129 & 0.4922 & 0.4505 $\pm$ 0.0146 & 0.2566 \\
				\bottomrule
		\end{tabular}}
		\begin{tablenotes}
			\item ME denotes MemoryError. Average values are computed over the datasets with valid numerical results.
		\end{tablenotes}
	\end{threeparttable}
\end{table}

The average results show that MDL-GBG+AC achieves the highest ARI, ACC, and NMI among all compared methods, with values of 0.5083, 0.7556, and 0.5629, respectively. MDL-GBG+KM++ also obtains competitive average results, reaching 0.4898 on ARI, 0.7213 on ACC, and 0.5378 on NMI. Compared with the corresponding sample-level baselines, the MDL-GBG-based pipelines generally improve the average clustering quality, suggesting that the generated stable balls provide a compact and structurally informative upstream representation rather than only reducing the number of objects to be clustered.

Some baseline methods encounter memory errors on large-scale datasets, and these cases are marked as ME in Table~\ref{tab:all_metrics_comparison}. To avoid overstating the average metric values of methods with failed runs, the average values in the last rows are computed only from valid numerical results. The influence of incomplete numerical results is further handled in the statistical significance analysis, where CDC and Crop are excluded because some methods do not produce valid results on these datasets.

The dataset-level results further illustrate the behavior of the proposed method. MDL-GBG+AC performs well on several datasets with different scales and feature dimensions, such as Wine, Glass, Heart, Waveform, HTRU2, Shuttle, CDC, and Crop. For example, on Wine, MDL-GBG+AC achieves the best ARI, ACC, and NMI simultaneously. On HTRU2 and Crop, it also obtains the best results on multiple metrics, showing that the proposed representation can remain effective on relatively large datasets. MDL-GBG+KM++ shows favorable performance on datasets such as Segment, Rice, and PenDigits, suggesting that the generated stable granular balls can also benefit centroid-based clustering when the obtained representation is compatible with the cluster geometry.

Compared with existing granular-ball clustering methods, MDL-GBG also shows competitive overall performance. It generally outperforms GBCT and GBSC in terms of average metric values, and remains comparable to or better than WGBC and MDMSC on many datasets. This comparison indicates that the performance gain does not come merely from using granular balls as data units. Rather, the MDL-based local model competition helps generate stable granular balls that are more suitable for subsequent clustering.

This dataset-level comparison also suggests that ACC should be interpreted with caution on imbalanced datasets. A high ACC may mainly reflect the correct matching of dominant classes, while minority structures can still be poorly recovered. ARI and NMI provide complementary evidence by measuring pairwise clustering consistency and information agreement, respectively. For example, on Iranian Churn and CDC Diabetes Health Indicators, some methods obtain relatively high ACC values but much lower ARI or NMI values. Therefore, the comparison on these datasets should not rely on ACC alone.

At the same time, the results show that the performance of MDL-GBG depends on both the structural characteristics of the dataset and the downstream clustering back-end. The advantage of MDL-GBG is more stable when it is combined with AC than when it is combined with KMeans++, suggesting that the local structure preserved by MDL-GBG is more naturally exploited by a hierarchical clustering procedure. On datasets with weak clusterability, class imbalance, or ambiguous global structure, the improvement may become less consistent. Therefore, MDL-GBG provides a useful upstream representation in many cases, but it does not remove the intrinsic difficulty of structurally challenging datasets.

\subsection{Statistical Significance Analysis}
\label{subsec:statistical_significance}

To further examine whether the observed performance differences are statistically meaningful, the Friedman test~\citep{Friedman1940} followed by the Nemenyi post-hoc test~\citep{demvsar2006} is conducted on ARI, ACC, and NMI. Since larger metric values indicate better clustering performance, the methods are ranked in descending order on each dataset, and the average rank is computed across datasets. A smaller average rank therefore denotes better overall performance. As CDC and Crop contain memory-error entries for some compared methods, the significance tests are conducted on the remaining 18 datasets with complete numerical results for all methods. The significance level is set to 0.05.

\begin{figure}[!t]
	\centering
	\begin{subfigure}{0.49\textwidth}
		\centering
		\includegraphics[width=\linewidth]{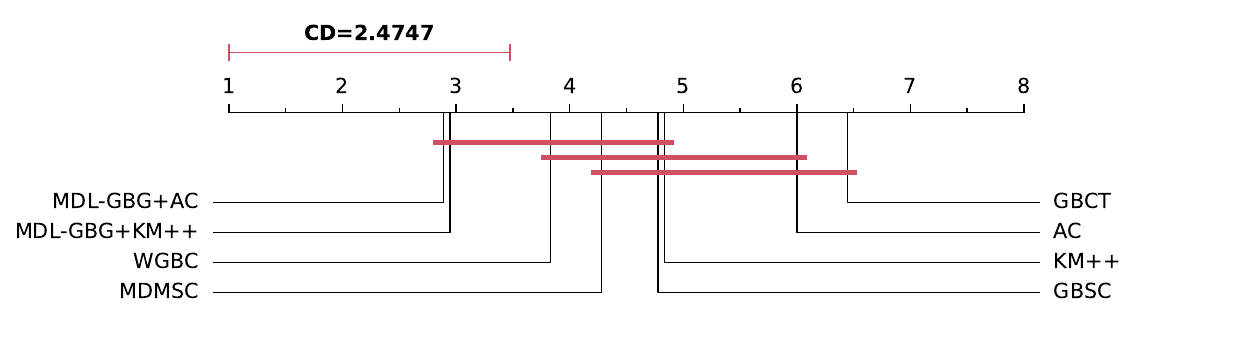}
		\caption{ARI}
	\end{subfigure}
	\begin{subfigure}{0.49\textwidth}
		\centering
		\includegraphics[width=\linewidth]{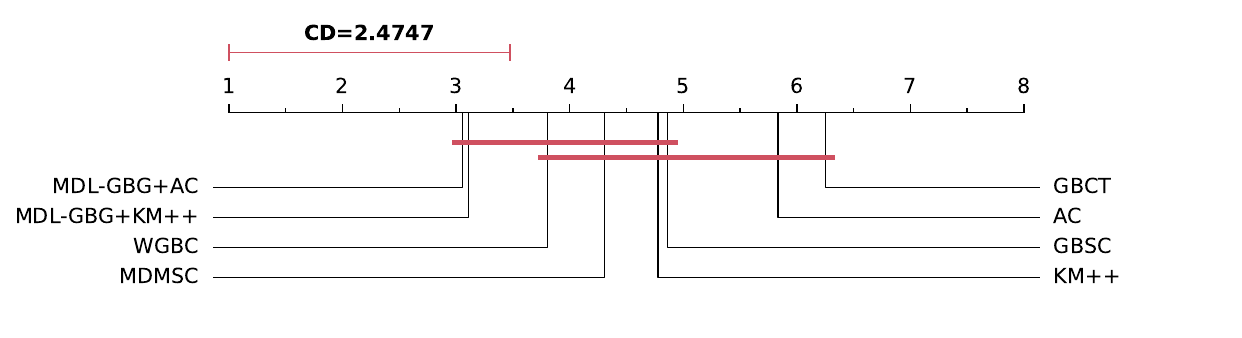}
		\caption{ACC}
	\end{subfigure}
	\begin{subfigure}{0.5\textwidth}
		\centering
		\includegraphics[width=\linewidth]{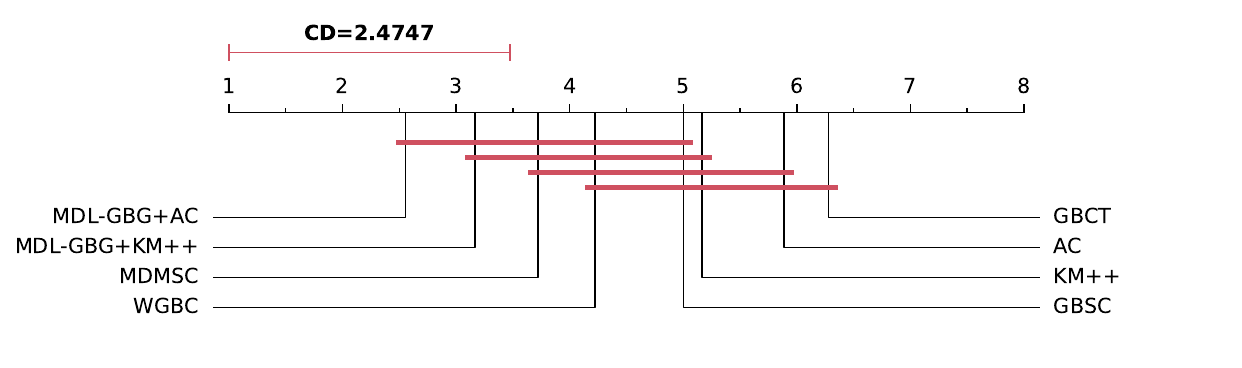}
		\caption{NMI}
	\end{subfigure}
	\caption{Critical difference diagrams of the Nemenyi post-hoc test on ARI, ACC, and NMI. Methods connected by a horizontal line are not significantly different at the 0.05 level.}
	\label{fig:cd_diagram}
\end{figure}

The Friedman test rejects the null hypothesis of equivalent performance among all methods, with $p=1.0\times 10^{-5}$, $p=1.58\times 10^{-4}$, and $p=7.0\times 10^{-6}$ for ARI, ACC, and NMI, respectively. The corresponding Nemenyi critical difference diagrams are shown in Fig.~\ref{fig:cd_diagram}.

The CD diagrams show that MDL-GBG+AC achieves the best average rank on ARI, ACC, and NMI, while MDL-GBG+KM++ also remains among the leading methods. The rank gaps between MDL-GBG+AC and AC/GBCT exceed the critical difference on ARI and ACC, and the gaps between MDL-GBG+AC and KM++/AC/GBCT exceed the critical difference on NMI. At the same time, several competitive granular-ball methods remain within the critical difference, indicating statistically comparable performance under the Nemenyi test. Overall, the Friedman-Nemenyi analysis is consistent with the dataset-level results and provides a balanced view of the comparative performance.

\subsection{Ablation Study}
\label{subsec:ablation}

To assess the contribution of two core components of MDL-GBG, an ablation study was conducted on the same 20 UCI datasets used in the main comparison. The components examined were the core-ball-residual model $M_3$ and the residual reassignment mechanism. The model $M_3$ allows a local region to be represented by a compact core and a set of peripheral residual samples, while residual reassignment revisits these samples after the stable granular balls have been generated. AC was used throughout as the downstream clustering method, and the same experimental settings were maintained for all variants.

The complete MDL-GBG+AC method was compared with two ablated variants. In the first variant, $M_3$ was removed from the local model competition, leaving single-ball retention and binary splitting as the available local decisions. In the second variant, $M_3$ was retained but residual reassignment was disabled. The residual subsets produced during local regeneration were therefore preserved as independent balls rather than being reassigned to the stable granular balls. Preprocessing, initialization, and the downstream AC procedure were kept unchanged. A variant removing both components is not reported separately because disabling $M_3$ produces no residual pool and consequently renders residual reassignment inactive.

\begin{table}[!t]
	\centering
	\caption{Ablation results of the MDL-GBG+AC variants on the 20 UCI datasets. "Balls" denotes the average number of final granular balls, while "Res. Gen." and "Res. Reas." denote the average numbers of generated and reassigned residual samples, respectively.}
	\label{tab:ablation}
	\scriptsize
	\setlength{\tabcolsep}{4pt}
	\renewcommand{\arraystretch}{1.05}
	\begin{tabular}{lcccccc}
		\toprule
		Variant & ARI & ACC & NMI & Balls & Res. Gen. & Res. Reas. \\
		\midrule
		Full MDL-GBG+AC & \textbf{0.5083} & \textbf{0.7556} & \textbf{0.5629} & 866.4 & 2035.3 & 1880.8 \\
		w/o $M_3$ & 0.4657 & 0.7416 & 0.5267 & 941.1 & 0.0 & 0.0 \\
		w/o residual reassignment & 0.3417 & 0.6678 & 0.4153 & 1357.5 & 2035.3 & 0.0 \\
		\bottomrule
	\end{tabular}
\end{table}

As shown in Table~\ref{tab:ablation}, the complete method achieves the highest average ARI, ACC, and NMI among the three variants. Removing $M_3$ reduces the three metrics from 0.5083, 0.7556, and 0.5629 to 0.4657, 0.7416, and 0.5267, respectively. Without $M_3$, a local region must either remain intact or be divided into two sub-balls. The observed reduction indicates that this binary choice is not always sufficient, particularly when the main body of a region remains compact but a small number of peripheral samples are poorly represented by the current ball.

The effect of residual reassignment is more pronounced. When this mechanism is disabled, the average ARI, ACC, and NMI decrease to 0.3417, 0.6678, and 0.4153, respectively, while the average number of final granular balls increases from 866.4 to 1357.5. The number of generated residual samples remains unchanged at 2035.3, but the residual subsets are retained as independent balls rather than reconsidered after the stable-ball structure has been obtained. In the complete method, an average of 1880.8 residual samples are reassigned, accounting for approximately 92.4\% of all generated residuals. This result shows that most samples separated during local regeneration can subsequently be incorporated into the stable granular-ball representation under the reassignment criterion.

The two components therefore operate at different stages of the generation process. The model $M_3$ provides a local representation for regions containing a compact core and peripheral samples, whereas residual reassignment determines the final placement of the separated samples after the stable granular balls have been formed. Their joint use produces fewer final granular balls and higher average clustering scores than either ablated variant in the present experiments.

\subsection{Sensitivity Analysis}
\label{subsec:shell_sensitivity}

The core-ball-residual model represents peripheral samples through a shell region whose outer radius is determined by the factor $\beta$ in Eq.~\eqref{eq:rout_en}. Since $\beta$ controls the spatial extent of the residual code, its value may affect both the selection of the core-ball-residual explanation and the resulting granular-ball structure. The sensitivity analysis was therefore conducted using the deterministic MDL-GBG+AC pipeline, so that the configurations differ only in the value of $\beta$ and are not affected by random variation in downstream clustering. The tested values were ${1.25,1.50,1.75,2.00,2.25,2.50,3.00}$. Seven datasets with different sample sizes, feature dimensions, and numbers of classes were included: Iris, Wine, Ecoli, Glass Identification, Statlog (Image Segmentation), Optical Recognition of Handwritten Digits, and Dry Bean.

The average ARI, ACC, and NMI values over the seven datasets are reported in Table~\ref{tab:shell_sensitivity_avg}. The three metrics improve gradually as $\beta$ increases from 1.25 to 2.00, with $\beta=2.00$ producing the highest average performance among the tested settings. The results obtained at $\beta=1.75$ are also close to those at $\beta=2.00$. When the shell is further enlarged, however, the average performance decreases noticeably. In particular, the decline from $\beta=2.00$ to $\beta=2.25$ is observed consistently across ARI, ACC, and NMI. The effective range indicated by these results is therefore concentrated around $\beta=1.75$--$2.00$, rather than being uniform over the entire tested interval.

\begin{table}[t]
	\centering
	\caption{Average sensitivity results of MDL-GBG+AC under different shell-radius factors on seven datasets.}
	\label{tab:shell_sensitivity_avg}
	\begin{tabular}{c|ccc}
		\toprule
		$\beta$ & ARI & ACC & NMI \\
		\midrule
		1.25 & 0.5705 & 0.7079 & 0.6690 \\
		1.50 & 0.5754 & 0.7100 & 0.6788 \\
		1.75 & 0.5976 & 0.7262 & 0.6989 \\
		\textbf{2.00} & \textbf{0.6088} & \textbf{0.7356} & \textbf{0.7128} \\
		2.25 & 0.5336 & 0.6857 & 0.6552 \\
		2.50 & 0.5502 & 0.6911 & 0.6712 \\
		3.00 & 0.5470 & 0.6838 & 0.6692 \\
		\bottomrule
	\end{tabular}
\end{table}

The dataset-level ACC curves in Fig.~\ref{fig:shell_sensitivity_acc} provide a more detailed view of this behavior. The value yielding the highest ACC varies across datasets, reflecting differences in local geometry, density, and the distribution of peripheral samples. Nevertheless, $\beta=2.00$ remains competitive across the examined datasets and gives the best aggregate result. The curves also show that increasing $\beta$ beyond 2.00 does not provide a consistent benefit and may substantially reduce performance on some datasets. The default value used in the main experiments is therefore supported by the overall empirical pattern, although it is not uniformly optimal for every dataset.

\begin{figure}[t]
	\centering
	\includegraphics[width=0.78\linewidth]{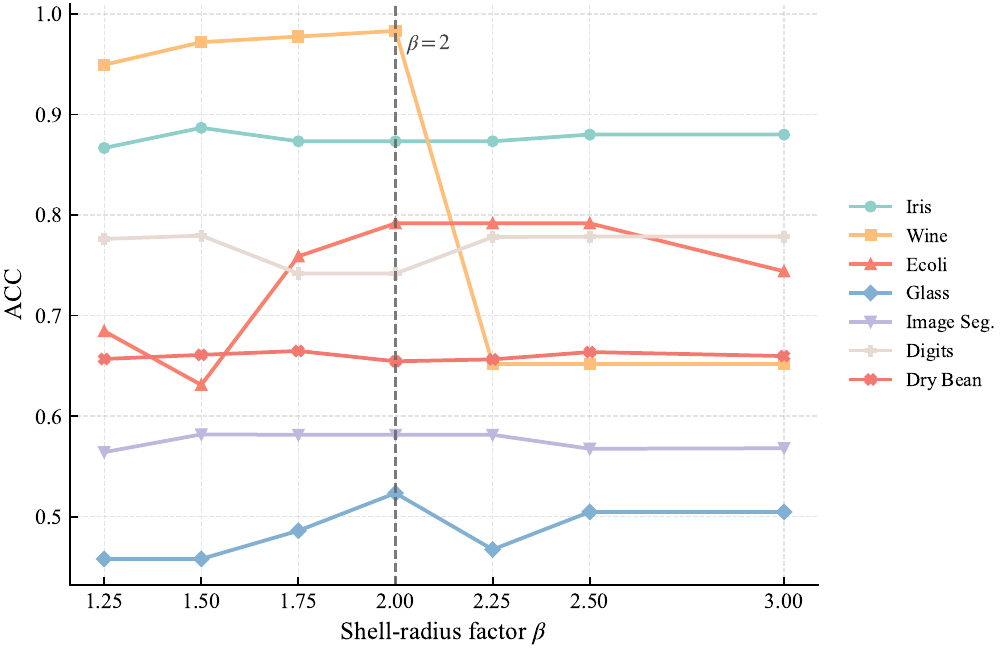}
	\caption{Dataset-level ACC of MDL-GBG+AC under different shell-radius factors $\beta$.}
	\label{fig:shell_sensitivity_acc}
\end{figure}

The structural statistics in Fig.~\ref{fig:shell_sensitivity_structure} further clarify how $\beta$ affects the generation process. As $\beta$ increases, the numbers of selected core-ball-residual explanations, generated residual samples, and reassigned residual samples generally decrease. This behavior follows from the residual coding term in Eq.~\eqref{eq:L3_en}. Enlarging the outer radius increases the shell volume and consequently raises the cost of encoding samples through the residual component. The core-ball-residual model is therefore selected less frequently, and fewer samples enter the residual pool. Conversely, a smaller shell provides a more localized residual description and makes $M_3$ more active during local model competition.

\begin{figure}[t]
	\centering
	\includegraphics[width=0.78\linewidth]{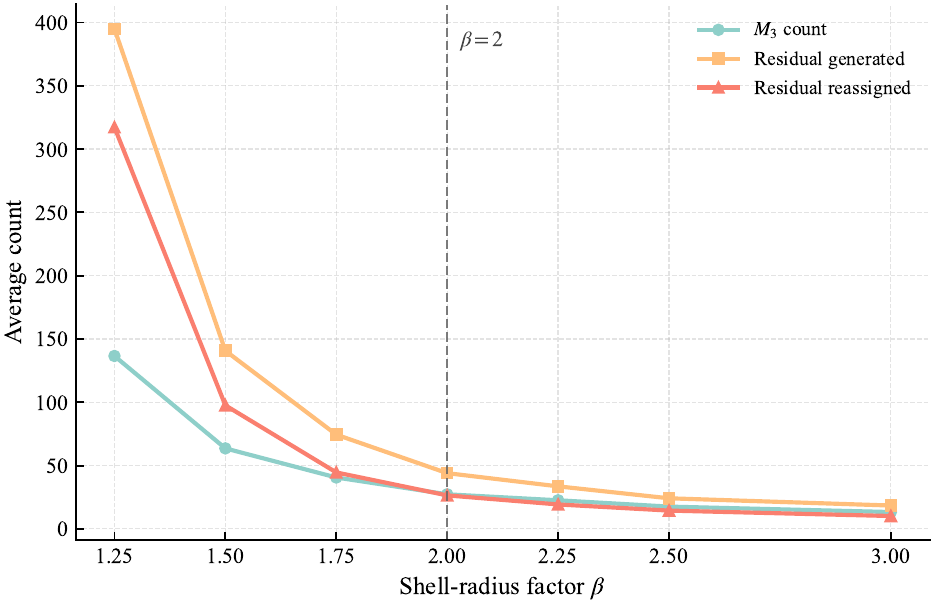}
	\caption{Average structural statistics of MDL-GBG under different shell-radius factors $\beta$.}
	\label{fig:shell_sensitivity_structure}
\end{figure}

Taken together, the clustering and structural results show that $\beta$ affects both the frequency with which the core-ball-residual model is selected and the quality of the resulting clustering representation. A shell that is too narrow may provide insufficient spatial support for peripheral samples, whereas an excessively broad shell weakens the coding advantage of separating residual samples from the compact core. Among the examined settings, $\beta=2.00$ provides the best balance in terms of average clustering performance and is consequently used as the common setting in the main experiments. The variation observed across datasets also suggests that a data-adaptive shell definition derived within the same MDL model-selection framework may further improve the treatment of heterogeneous local structures.

\subsection{Runtime Analysis}

Runtime is analyzed to examine the computational cost introduced by MDL-based granular-ball generation. Table~\ref{tab:runtime_grouped} summarizes the grouped average runtime over three sample-scale regimes, and Fig.~\ref{fig:runtime_bubble} illustrates the relation among sample size, feature dimensionality, and runtime for MDL-GBG+AC. Since MDL-GBG+AC and MDL-GBG+KM++ show similar runtime trends, only MDL-GBG+AC is shown in the bubble plot.

\begin{table}[htbp]
	\centering
	\caption{Grouped average runtime (seconds) across different sample-scale regimes.}
	\label{tab:runtime_grouped}
	\begin{threeparttable}
		\scriptsize
		\resizebox{\textwidth}{!}{%
			\begin{tabular}{lcccccccc}
				\toprule
				Sample scale & MDL-GBG+AC & MDL-GBG+KM++ & AC & KM++ & WGBC & MDMSC & GBSC & GBCT \\
				\midrule
				$\leq 5000$ & 0.2574 & 0.5401 & \textbf{0.0817} & 0.3515 & 1.6277 & 8.0995 & 1.2328 & 13.8824 \\
				$5000 < n \leq 50000$ & 23.0661 & 21.6747 & 3.1032 & \textbf{0.4065} & 27.4582 & 27.7924 & 41.5725 & 65.2882 \\
				$>50000$ & 27834.4844 & 27972.6602 & 176.6339\tnote{a} & \textbf{1.6171} & 225.8146 & 438.9400\tnote{a} & 5427.3152\tnote{a} & 1287.4568 \\
				\bottomrule
		\end{tabular}}
		
		\vspace{0.3em}
		\begin{minipage}{\textwidth}
			\footnotesize
			\tnote{a} The average is computed over executable datasets only. For AC, MDMSC, and GBSC, the results on CDC and Crop are unavailable due to MemoryError.
		\end{minipage}
	\end{threeparttable}
\end{table}

As shown in Table~\ref{tab:runtime_grouped}, the runtime of MDL-GBG increases with sample scale. For MDL-GBG+AC, the grouped average runtime is 0.2574\,s, 23.0661\,s, and 27834.4844\,s in the three regimes, while MDL-GBG+KM++ shows a similar trend. Fig.~\ref{fig:runtime_bubble} further indicates that sample size is the dominant factor, while high feature dimensionality can also increase the cost, as in Isolet. Crop is the most demanding case because it combines a very large sample size with relatively high dimensionality.

\begin{figure}[htbp]
	\centering
	\includegraphics[width=0.7\textwidth]{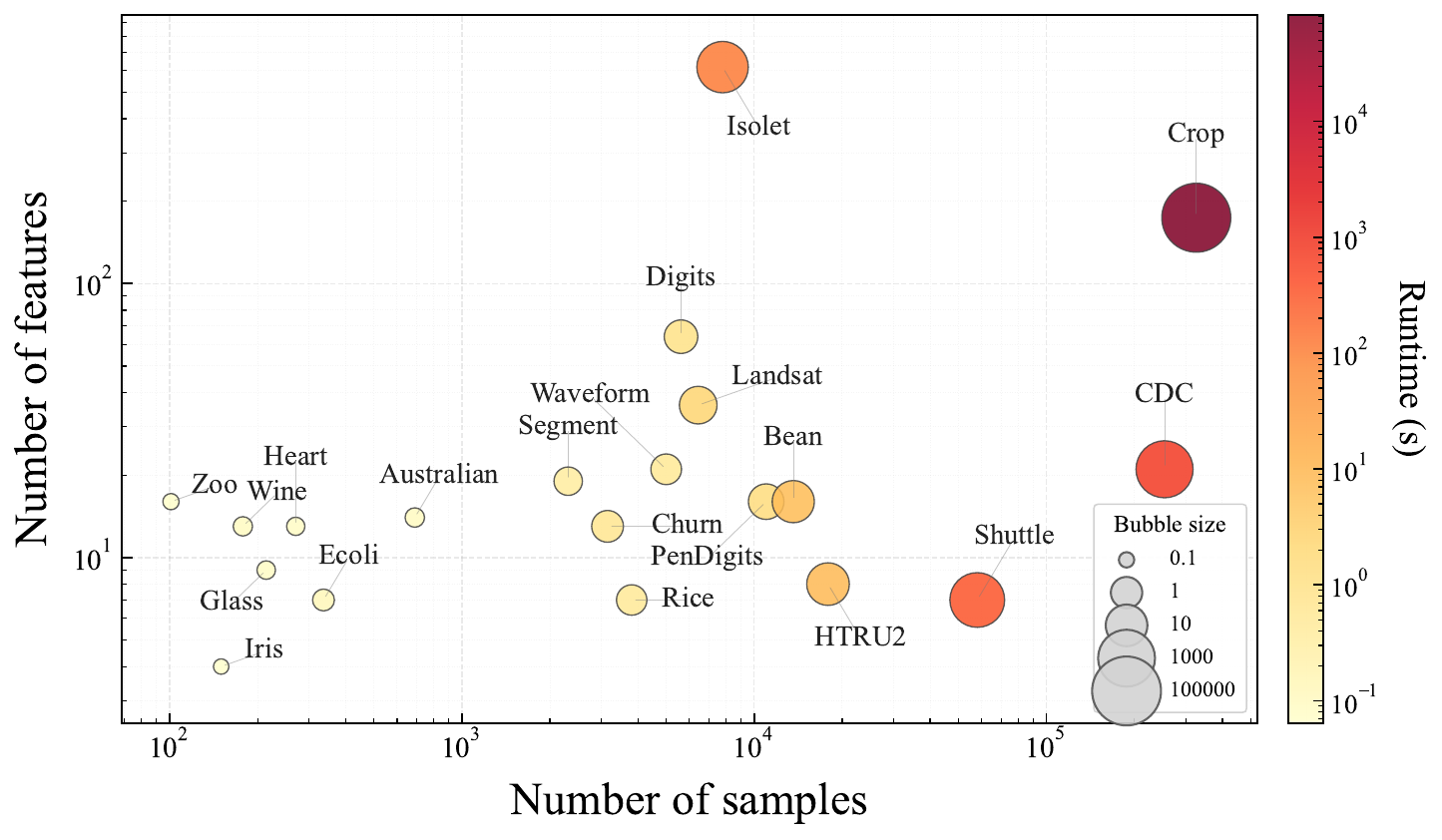}
	\caption{Effect of sample size and feature dimensionality on the runtime of MDL-GBG+AC.}
	\label{fig:runtime_bubble}
\end{figure}

From a comparative perspective, sample-level methods are faster in most cases because they do not perform granular-ball generation or local model selection. Among granular-ball methods, MDL-GBG is not the most efficient, especially on large-scale datasets, due to the additional cost of MDL-based model competition. However, AC, MDMSC, and GBSC fail on CDC and Crop due to MemoryError, whereas MDL-GBG produces results on all 20 datasets. Thus, the large-scale comparison reflects both computational speed and memory robustness. Overall, MDL-GBG trades additional computation for a more structured and interpretable generation process.

\section{Discussion and Conclusions}
\label{sec:discussion_conclusion}

The performance of granular-ball clustering depends not only on the downstream clustering procedure, but also on the construction of the granular-ball representation. Existing methods often employ separate criteria to determine whether a local region should be retained, divided, or refined by separating peripheral samples, making these decisions difficult to assess under a common objective. MDL-GBG addresses this issue by evaluating three candidate local models under the Minimum Description Length principle. The single-ball, two-ball, and core-ball-residual models describe alternative organizations of the samples within each current ball, and the model with the shortest description length determines whether the region is retained, partitioned into two sub-balls, or represented by a compact core and residuals. Ball retention, splitting, and residual separation are therefore governed by the same local model-selection criterion.

The experiments on 20 UCI datasets indicate that the granular balls generated by MDL-GBG provide a useful basis for subsequent clustering. MDL-GBG+AC attained the highest average ARI, ACC, and NMI among the compared methods, and its performance was further supported by the Friedman--Nemenyi test on the datasets with complete numerical results. MDL-GBG+KMeans++ also remained competitive, although its performance exhibited greater variation across datasets. This difference indicates that the final partition is influenced by both the generated granular-ball structure and the downstream clustering procedure. The ablation results further confirm the contributions of the core-ball-residual model and residual reassignment. Removing the former restricts the model space for regions containing a compact interior and peripheral samples, whereas removing the latter prevents peeled samples from being reconsidered after the stable granular balls have been obtained. Both modifications result in lower average clustering performance. The sensitivity analysis also indicates that the shell-radius factor affects the selection of the core-ball-residual model and the resulting partition. The strongest average performance was observed around $\beta=1.75$--$2.00$, while larger values generally led to lower clustering accuracy.

The differences observed across datasets are related to the assumptions made by the current local models. The single-ball model uses an isotropic Gaussian code and is therefore more suitable for locally compact and approximately spherical regions, but may describe anisotropic, curved, or locally low-dimensional structures less accurately. The two-ball model constructs candidate partitions according to the sample ordering along the first principal component. This ordering captures the dominant direction of local variation, but may miss a meaningful separation along another direction. The core-ball-residual model uses the same shell-radius factor for all granular balls, although the distribution of peripheral samples may vary with local density and geometry. These limitations are more likely to affect the generated representation when the data contain strongly overlapping, imbalanced, or weakly separated clusters, because a locally preferred description does not always correspond to the global cluster structure. The repeated evaluation of core--residual configurations also contributes substantially to the computational cost for large granular balls. In addition, although granular-ball generation does not use class labels or the true number of clusters, the downstream AC and KMeans++ procedures still require the prescribed cluster number $K_c$ in the current experiments.

Future work will focus on extending the range of local structures that can be represented within the description-length framework. More flexible coding models may be considered for anisotropic, curved, or locally low-dimensional regions, while the candidate partitions of the two-ball model may be derived from multiple data-dependent directions when the dominant principal direction does not provide an adequate separation. The treatment of peripheral samples may also be made locally adaptive by determining the residual extent from the density and geometry of each granular ball rather than applying a common shell-radius factor throughout the dataset. Such extensions should preserve the comparability of the candidate models and avoid introducing criteria independent of the MDL objective. The computational cost of model evaluation may be reduced through incremental updates of sufficient statistics and reuse of quantities shared across candidate configurations. Another direction is to incorporate relations among the generated granular balls into the coding framework, so that local generation and global cluster organization can be considered more closely together. This may also provide a basis for estimating the number of clusters rather than requiring it to be specified only for the downstream clustering procedure. Overall, MDL-GBG provides a unified formulation in which granular-ball retention, splitting, and residual separation are determined through comparable local description lengths, and the experimental results support the use of the resulting granular balls in subsequent clustering.

\section*{CRediT authorship contribution statement}
\textbf{Zeqiang Xian}: Conceptualization, Methodology, Writing – original draft, Software, Validation. \textbf{Caihui Liu}: Conceptualization, Methodology, Writing – review \& editing, Validation, Supervision. \textbf{Yong Zhang}: Software, Data curation. 
\textbf{Wenjing Qiu}: Software, Data curation. 
\textbf{Duoqian Miao}: Writing – review \& editing
\textbf{Witold Pedrycz}: Writing – review \& editing

\section*{Declaration of Competing Interest}
The authors declare that they have no known competing financial interests or personal relationships that could have appeared to influence the work reported in this paper.

\section*{Data availability}
Data will be made available on request.

\section*{Acknowledgment}
The research is supported by the National Natural Science Foundation of China under Grant No. 62566003, Graduate Innovation Funding Program of Jiangxi Province under Grant No. YC2025-S224.

\bibliographystyle{elsarticle-num}
\biboptions{sort&compress}
\bibliography{References}

\end{document}